\documentclass[sigconf]{acmart}

\usepackage{algorithm}
\usepackage{algorithmic}
\usepackage{mathtools}
\usepackage{amsthm}
\usepackage{graphicx}
\usepackage{multirow}
\usepackage{subcaption}
\usepackage{enumitem}
\usepackage{booktabs}
\usepackage[table]{xcolor}
\usepackage{hyperref}
\usepackage{xcite}
\usepackage[capitalize,noabbrev]{cleveref}

\theoremstyle{plain}
\newtheorem{theorem}{Theorem}[section]

\theoremstyle{definition}

\newtheorem{assumption}[theorem]{Assumption}
\theoremstyle{remark}

\definecolor{Gray}{gray}{0.94}

\usepackage{xr-hyper}
\makeatletter
\newcommand*{\addFileDependency}[1]{%
  \typeout{(#1)}%
  \@addtofilelist{#1}%
  \IfFileExists{#1}{}{\typeout{No file #1.}}%
}

\makeatother
\newcommand*{\myexternaldocument}[1]{%
    \externaldocument{#1}%
    \addFileDependency{#1.tex}%
    \addFileDependency{#1.aux}%
}

\renewcommand{\footnote}[1]{}
\newcommand{\edit}[1]{\textcolor{black}{#1}} 

\myexternaldocument{supplementary}

\AtBeginDocument{%
  }

\copyrightyear{2026}
\acmYear{2026}
\setcopyright{cc}
\setcctype{by}
\acmConference[MM '26]{Proceedings of the 34th ACM International Conference on Multimedia}{November 10--14, 2026}{Rio de Janeiro, Brazil}
\acmBooktitle{Proceedings of the 34th ACM International Conference on Multimedia (MM '26), November 10--14, 2026, Rio de Janeiro, Brazil}
\acmDOI{10.1145/3767308.3836532}
\acmISBN{979-8-4007-2213-4/2026/11}

\acmSubmissionID{9332}

\begin{document}

\title{Learning What Not to Learn: Adversarial Disentangled Prompt Tuning for Robust Vision-Language Models}


\author{Yang Chen}
\orcid{0009-0002-5176-6690}
\affiliation{%
  \institution{Southern University of Science and Technology}
  \city{Shenzhen}
  \country{China}
}
\email{cheny2023@mail.sustech.edu.cn}

\author{Zhan Zhuang}
\orcid{0000-0003-0215-8728}
\affiliation{%
  \institution{City University of Hong Kong}
  \city{Hong Kong}
  \country{China}
}
\email{12250063@mail.sustech.edu.cn}

\author{Yanbin Wei}
\orcid{0000-0003-1301-2505}
\affiliation{%
  \institution{Hong Kong University of Science and Technology}
  \city{Hong Kong}
  \country{China}
}
\email{yanbin.ust@gmail.com}

\author{Zebin Chen}
\orcid{0009-0009-2233-1349}
\affiliation{%
  \institution{Southern University of Science and Technology}
  \city{Shenzhen}
  \country{China}
}
\email{12432660@mail.sustech.edu.cn}

\author{Hua Liu}
\correspondingauthor
\orcid{0000-0002-9613-8877}
\affiliation{%
  \institution{Southern University of Science and Technology}
  \city{Shenzhen}
  \country{China}
}
\email{liuh5@sustech.edu.cn}

\author{Yu Zhang}
\correspondingauthor
\orcid{0000-0003-1100-4835}
\affiliation{%
  \institution{Southern University of Science and Technology}
  \city{Shenzhen}
  \country{China}
}
\email{yu.zhang.ust@gmail.com}

\thanks{\ding{41}: Corresponding authors.}
\renewcommand{\shortauthors}{Chen et al.}

\begin{abstract}
  While adversarial prompt tuning can enhance robustness of vision-language models efficiently, we find that existing methods aggravate robust generalization overfitting on seen classes, leading to a rapid degradation in performance against adversarial examples of unseen classes as training progresses. We empirically identify that this degradation stems from the tendency of the model to learn pseudo-robust features (i.e., non-generalizable shortcuts). To mitigate this, we propose \textbf{ADAPT} (\textbf{A}dversarial \textbf{D}isent\textbf{A}ngled \textbf{P}rompt \textbf{T}uning), a robust prompt tuning framework following the philosophy of ``Learning What Not to Learn''. Specifically, ADAPT uses a dual-prompt mechanism with a target prompt and a pool of decoy prompts. During training, the decoy prompts are guided to entrap diverse pseudo-robust features, while the target prompt is constrained to be orthogonal to the decoys in the embedding space to learn robust features. By disentangling the robust features from the pseudo-robust features, ADAPT effectively prevents robust generalization overfitting. We further provide an analysis showing that the orthogonal loss bounds the effect of shifts in pseudo-robust features on unseen classes, yielding a testing error guarantee. Empirically, extensive experiments demonstrate that ADAPT substantially improves the robustness of the target prompt on unseen classes. The code is available at \url{https://github.com/cheny02/ADAPT-ACMMM2026}.
\end{abstract}


\begin{CCSXML}
<ccs2012>
   <concept>
       <concept_id>10010147.10010178</concept_id>
       <concept_desc>Computing methodologies~Artificial intelligence</concept_desc>
       <concept_significance>500</concept_significance>
       </concept>
 </ccs2012>
\end{CCSXML}

\ccsdesc[500]{Computing methodologies~Artificial intelligence}
\keywords{Prompt Tuning; Adversarial Robustness; Vision-Language Models}
  


\maketitle

\section{Introduction}
Vision-language models (VLMs), such as CLIP \citep{clip} and its successors \citep{evaclip,evaclip18,longclip,align,openclip}, have achieved remarkable success in bridging the gap between visual and textual modalities. Through extensive pre-training on massive paired image-text data, these models have shown powerful zero-shot performance on unseen data. However, recent studies \citep{schlarmann2023, zhao2023} have shown that VLMs are extremely fragile to visual adversarial perturbations, often yielding confident but incorrect predictions when exposed to imperceptible noise \citep{explaining}. Such vulnerability raises concerns regarding the reliability of VLMs, prohibiting their adoption in safety-critical scenarios like autonomous driving \citep{wang2023does} and healthcare \citep{finlayson2019adversarial}, where a single adversarial failure could lead to unacceptable consequences.

\begin{figure}[!t]
\centering
\includegraphics[width=0.9\linewidth]{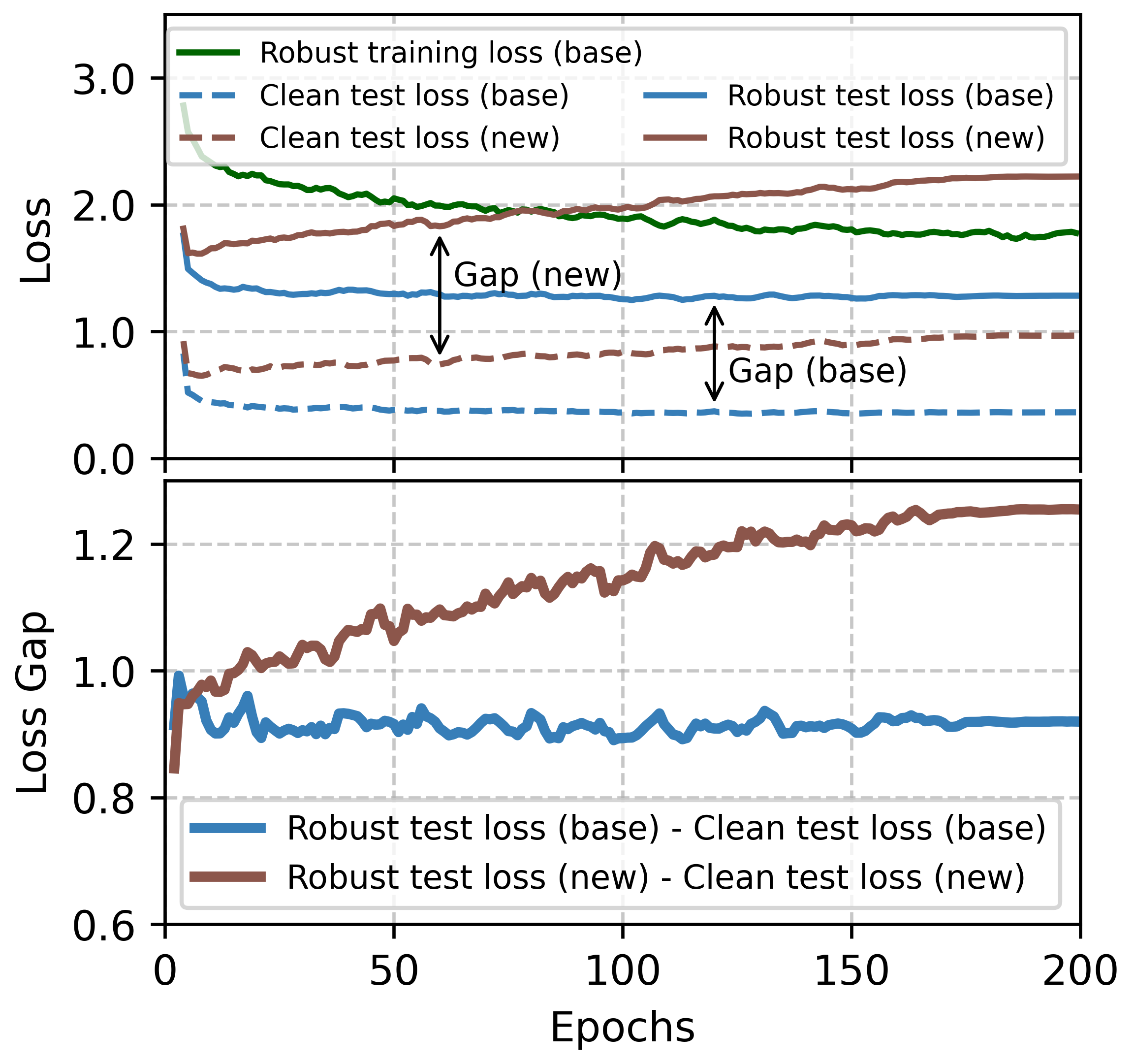}
\caption{Illustration of robust generalization overfitting. Adversarial prompt tuning on Caltech101 with CLIP ViT-B/32 under  PGD-$l_{\infty}$ attack ($\epsilon=4/255$).  Top: Robust loss remains stable on base (seen) classes but increases on new (unseen) classes as training progresses. Bottom: The robust–clean loss gap remains nearly constant on base classes but grows steadily on new classes. 
}
\Description{A two-panel line graph illustrates loss metric changes over 200 training epochs. The vertical axis of the top panel ranges from 0.0 to 3.0. A solid line representing robust training loss on base classes steadily decreases. Dashed lines for clean test loss on base and new classes remain near zero at the bottom. The solid line for robust test loss on base classes is stable around 1.3. In contrast, the solid line for robust test loss on new classes rises continuously from 1.6 to 2.2. The vertical axis of the bottom panel plots the loss gap. The gap between robust and clean test loss on base classes is a horizontal line near 0.9. The gap for new classes climbs steadily from 0.9 to 1.3 and deviates significantly.}
\label{fig:introduction}
\end{figure}

To defend against such threats, Adversarial Training (AT) \citep{adver_training} is widely regarded as the most effective defense. Prior works \citep{tecoa,fare,pmgaft,slade,advsimplex} have integrated AT with fine-tuning to enhance the robustness of CLIP-style models. However, given the ever-increasing scale of modern VLMs, the prohibitive computational cost of full-parameter adversarial fine-tuning renders it impractical for large-scale deployment. Consequently, Parameter-Efficient Fine-Tuning (PEFT) \citep{peft}  has emerged as a promising alternative. Among PEFT techniques, prompt tuning \citep{prompt_tuning} stands out for its lightweight nature and compatibility with large-scale image-text pre-training. Building on this, recent studies \citep{apt,advpt,fap,tapt,rtpt} have integrated AT with prompt tuning, termed \emph{adversarial prompt tuning}, to enhance the robustness of CLIP-style models on downstream tasks efficiently.

Upon analyzing the training dynamics of adversarial prompt tuning, we identify a phenomenon that diverges from the widely known robust overfitting \citep{robust_overfitting}, which manifests itself as a continued decrease in robust training loss coupled with an increase in robust test loss on the same classes as training progresses. 
In contrast, within the context of adversarial prompt tuning, we observe a pattern illustrated in \cref{fig:introduction}. Specifically, the robust training and test losses on base (seen) classes consistently decrease or remain stable throughout training. However, the robust test loss on new (unseen) classes begins to rise significantly when the training process proceeds. 
Crucially, the gap between robust and clean test losses remains stable for base classes, whereas it widens continuously for new classes as training progresses. We term this phenomenon ``\textit{robust generalization overfitting}'', highlighting a specific failure in transferring robustness to unseen semantic categories.

We attribute this robust generalization overfitting to shortcut learning \citep{shortcut}. Specifically, the model tends to rely on pseudo-robust features that satisfy the training objective for seen categories but lack high-level semantic generalization. Although adversarial training aims to encourage the model to acquire robust representations, the model often takes a shortcut by learning to reverse-engineer the specific perturbation patterns generated by the attack algorithm on base classes. Consequently, these features are effective for defending against attacks on seen data but are inherently non-generalizable to unseen categories, which limits the practical utility of adversarially trained VLMs.

Motivated by these insights, we propose \textbf{A}dversarial \textbf{D}isent\-\textbf{A}ngled \textbf{P}rompt \textbf{T}uning (\textbf{ADAPT}), a robust prompt tuning framework grounded in the philosophy of ``Learning What Not to Learn'' \citep{kim2019learning}. ADAPT designs a dual-prompt mechanism comprising a robust target prompt and a pool of decoy prompts. During training, decoy prompts are sampled from the pool and intentionally guided to absorb diverse non-generalizable pseudo-robust features, acting as a trap for shortcuts. Meanwhile, the target prompt is tasked with learning robust representations. To ensure the purity of the learned features, we enforce an explicit orthogonal loss that separates the target prompt from the decoy prompts. By effectively rejecting the shortcut features captured by the decoy prompts, the target prompt is compelled to focus on essential and generalizable visual semantics that remain robust across unseen categories. Extensive experiments demonstrate that the proposed ADAPT method effectively enhances the robustness of prompt on unseen classes.

The main contributions of this work are four-fold.
\begin{itemize}[topsep=2pt,itemsep=1pt,parsep=0pt,partopsep=0pt]
\item To the best of our knowledge, we are the first to find the phenomenon of ``robust generalization overfitting'' in adversarial prompt tuning, revealing that the degradation of robustness on unseen classes stems from the reliance of VLMs on pseudo-robust features.
\item We propose the ADAPT method as a robust prompt tuning framework, which employs a dual-prompt mechanism with orthogonal loss to actively entrap shortcuts into decoy prompts, thereby forcing the target prompt to learn robust features.
\item We provide a theoretical guarantee demonstrating that by minimizing the projection of the target prompt onto the shortcut subspace, ADAPT bounds the generalization error on unseen classes.
\item Extensive experiments on 15 datasets demonstrate the effectiveness of the proposed ADAPT method, particularly in the challenging adversarial base-to-new generalization setting, where it outperforms state-of-the-art methods by effectively mitigating the robustness gap between seen and unseen classes.
\end{itemize}

\section{Related Work}
\noindent \textbf{CLIP-based VLMs.}
VLMs have fundamentally reshaped cognitive systems by bridging the semantic gap between the visual and textual modalities \citep{yin1,yin2}, demonstrating exceptional capabilities in open-world vision tasks \citep{llava,minigpt}. The seminal work of CLIP \citep{clip}, pre-trained on about 400 million image-text pairs via contrastive learning, established a dominant paradigm for learning transferable vision-language representations. This success has catalyzed a proliferation of CLIP-style architectures. Following initial extensions like ALIGN \citep{align} and OpenCLIP \citep{openclip}, recent advancements have further expanded this paradigm through efficient scaling strategies such as SigLIP \citep{siglip}, enhanced data curation like MetaCLIP \citep{metaclip} and DFN \citep{dfn}, and massive parameter scaling exemplified by EVA-CLIP-18B \citep{evaclip,evaclip18}. Given its trailblazing role and widespread adoption as a standard backbone, our work focuses on CLIP to establish a baseline for robust prompt tuning within this foundational VLM paradigm.

\noindent \textbf{Adversarial training of VLMs.}
Despite the impressive zero-shot capabilities of VLMs, they remain highly vulnerable to adversarial attacks, which add imperceptible perturbations to input images, easily misleading model predictions \citep{explaining, schlarmann2023, zhao2023}. Adversarial Training (AT) \citep{adver_training} is widely recognized as one of the most effective defense strategies against such threats. Initial attempts to fortify VLMs, such as TeCoA \citep{tecoa}, FARE \citep{fare}, PMG-AFT \citep{pmgaft}, SLADE \citep{slade}, and AdvSimplex \citep{advsimplex}, relied on full-parameter adversarial fine-tuning. While effective, these methods incur prohibitively high computational costs, limiting large-scale deployment.

To mitigate these efficiency constraints, recent research has shifted towards integrating AT with prompt tuning \citep{coop}, a parameter-efficient fine-tuning paradigm originally designed to optimize continuous context vectors for downstream tasks \citep{coop, maple}. Building on this lightweight mechanism, adversarial prompt tuning methods have emerged as a promising direction. AdvPT \citep{advpt} and APT \citep{apt} demonstrate that optimizing prompts on adversarial examples can enhance the robustness. FAP \citep{fap} further improves the robustness by enforcing multi-modal consistency. Despite these advances, existing adversarial prompt tuning methods often struggle with robustness on unseen classes. The proposed ADAPT method aims to alleviate this problem through a dual-prompt mechanism that guides decoy prompts to absorb pseudo-robust features while enforcing orthogonality on the target prompt, thereby disentangling robust features from pseudo-robust features.

\section{Methodology}

In this section, we begin by overviewing the preliminaries and presenting a toy example to elucidate the phenomenon of robust generalization overfitting. Subsequently, we detail the proposed ADAPT framework and conclude with a theoretical analysis that substantiates its robustness.

\subsection{Preliminaries}
\noindent \textbf{CLIP.}
CLIP aligns visual and textual representations in a shared embedding space via contrastive pre-training. It consists of an image encoder $\mathcal{E}_\mathrm{I}$ and a text encoder $\mathcal{E}_\mathrm{T}$. Given an image $\mathbf{x}$, the image encoder extracts its visual feature $z( \mathbf{x} )  = \mathcal{E}_\mathrm{I}(\mathbf{x})$. For classification, a set of hand-crafted prompts is constructed using the manual template, e.g., ``a photo of a [CLS]'', where ``[CLS]'' is a placeholder for class tokens like ``dog''.\footnote{***you need to say the hand-crafted prompts are used as input to the text encoder. Otherwise why do you mention prompts here? revise this part. \edit{***revised.}} \edit{These prompts are then fed into the} text encoder to generate textual embeddings $
\mathbf{t}_{\mathrm{H}}=\{\mathbf{t}_{\mathrm{H}}^{c}\}_{c=1}^{C}
$ for each class $c$, where \edit{$C$ is the number of classes.}\footnote{***define $C$ here. \edit{***revised}} The probability that $\mathbf{x}$ belongs to class $y$ is calculated as
$
p_{\mathrm{clip}}(y|\mathbf{x},\mathbf{t}_{\mathrm{H}})=\frac{\exp(\cos(z( \mathbf{x} ) ,\mathbf{t}_{\mathrm{H}}^{y})/\tau )}{\sum\nolimits_{c=1}^C{\exp(\cos(z( \mathbf{x} ) ,\mathbf{t}_{\mathrm{H}}^{c})/\tau )}}
$,
where $\tau$ is the temperature and $\cos(\cdot, \cdot)$ denotes the cosine similarity.

\noindent \textbf{Prompt Tuning.}
While effective, hand-crafted prompts are suboptimal. Prompt tuning \citep{coop} addresses this by introducing continuous learnable vectors $\mathbf{v} = \{\mathbf{v}_1, \dots, \mathbf{v}_M\}$ to replace hand-crafted prompts. These vectors are concatenated with the class token to form the learnable prompt $\{\mathbf{v}, [ \mathrm{CLS}] \}$, which is then fed into $\mathcal{E}_T$ to obtain a set of textual embeddings $\mathbf{t}=\{\mathbf{t}^{c}\}_{c=1}^{C}$ with $\mathbf{t}^{c}$ denoting the textual embedding of the learnable prompt for class $c$. The probability of $\mathbf{x}$ belonging to class $y$ is calculated as
$p(y|\mathbf{x},\mathbf{t}) = \frac{\exp(\cos(z( \mathbf{x} ) , \mathbf{t}^y) / \tau)}{\sum\nolimits_{c=1}^C {\exp(\cos(z( \mathbf{x} ) , \mathbf{t}^c) / \tau)}}$.

\noindent \textbf{Adversarial Prompt Tuning (APT).}
Despite efficiency, standard prompt tuning methods remain vulnerable to adversarial attacks \citep{adver_training}. An adversarial example $\mathbf{x}_{\mathrm{adv}}$ in the adversarial attack is crafted by adding 
a perturbation $\delta$ to a clean image $\mathbf{x}$ within a perturbation budget $\epsilon$ by solving the following problem\footnote{***what is $\mathbf{t}$ here \edit{***$\mathbf{t}$ is the textual embedding of learnable prompts, consistent with the definition provided in the preceding ``Prompt Tuning'' paragraph.}}
\begin{equation*}
    \delta = \underset{\\ \|\delta'\|_p \le \epsilon}{\arg\max} \ \mathcal{L}_{\mathrm{ce}}(\mathbf{x}+\mathbf{\delta}', \mathbf{t}, y),
\end{equation*}
where $\mathbf{x}_{\mathrm{adv}} = \mathbf{x} + \delta$, $\|\cdot\|_p$ denotes the $\ell_p$ norm
(typically $\ell_\infty$), and 
$\mathcal{L}_{\mathrm{ce}} ( \mathbf{x}_{\mathrm{adv}},\mathbf{t},\mathbf{y} ) =-\sum\nolimits_{i=1}^C{y_i\log  p( y_i|\mathbf{x}_{\mathrm{adv}},\mathbf{t} )}$\footnote{***$\mathbf{t}$ is not used explicitly here. This cross-entropy loss is used multiple times in the subsequent sections and very important. Need to define it precisely. \edit{***I explicitly denote the probability as $p(y_i|\mathbf{x}, \mathbf{t})$ to clarify its dependency on the prompt embeddings $\mathbf{t}$.}} denotes the cross-entropy loss with ground-truth $\mathbf{y}$. 
To defend against such attacks, APT \citep{apt} integrates adversarial training into prompt tuning. That is, instead of training on clean images, APT optimizes the learnable vectors $\mathbf{v}$ to minimize the loss on adversarial examples as
\begin{equation}
\min_{\mathbf{v}} \ \mathbb{E}_{(\mathbf{x}, \mathbf{y}) \sim \mathcal{D}} \left[ \mathcal{L}_{\mathrm{ce}}(\mathbf{x}_{\mathrm{adv}}, \mathbf{t}, \mathbf{y}) \right],
\end{equation}
where $\mathcal{D}$ denotes the training dataset. Though APT improves the robustness on seen classes, our work identifies that it suffers from robust generalization overfitting.

\subsection{A Motivating Example}
To visualize the phenomenon of robust generalization overfitting, we construct a 2D synthetic dataset as shown in Figure \ref{fig:toy_example}.

\begin{figure}[tbph]
    \centering
    \includegraphics[width=1.0 \linewidth]{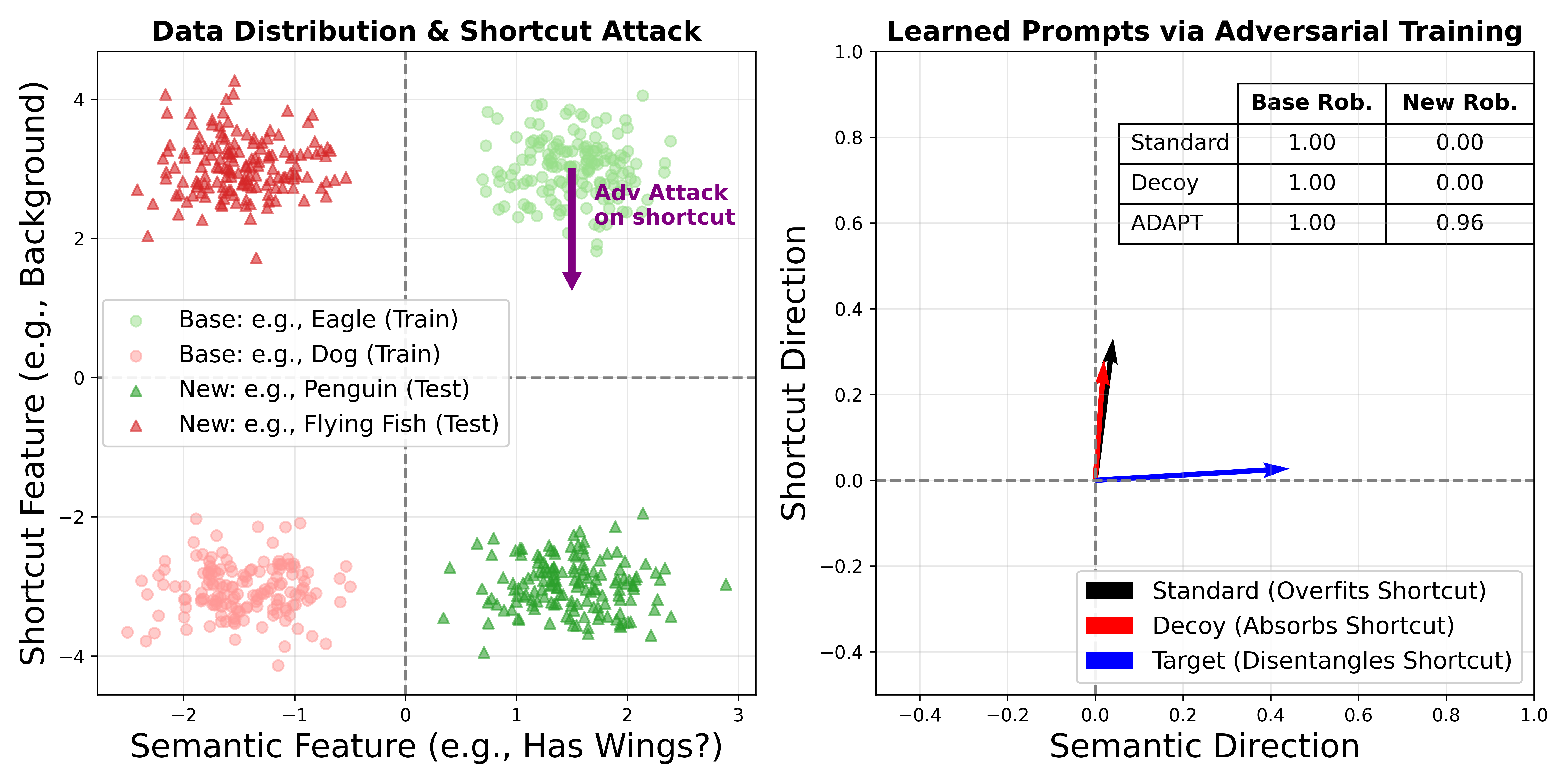}
    \caption{Toy example \edit{(bird vs. non-bird)}. While standard adversarial prompt tuning (i.e., black arrow) gets trapped in non-generalizable shortcuts, our ADAPT employs a decoy prompt (i.e., red arrow) to absorb these shortcuts, guiding the target prompt (i.e., blue arrow) to learn robust features.}
    \label{fig:toy_example}
    \Description{A two-panel diagram demonstrates how the proposed method avoids shortcut learning in a 2D feature space. The left panel is a scatter plot with semantic features on the horizontal axis and shortcut features on the vertical axis. It shows four data clusters with two for base classes and two for new classes. These clusters are separated by a large vertical gap representing the shortcut direction. The right panel displays three vector arrows originating from the center coordinates. A black arrow representing standard adversarial training points almost entirely along the vertical shortcut axis. A red arrow for the decoy prompt also points vertically to absorb this shortcut. A blue arrow for the target prompt points along the horizontal semantic direction and is orthogonal to the shortcut.}
\end{figure}

\noindent \textbf{Data Distribution and Shortcut Bias.}
We simulate a binary classification task with two latent features: a \textit{semantic feature} $x_{\mathrm{sem}}$ (x-axis) and a \textit{shortcut feature} $x_{\mathrm{sho}}$ (y-axis).
For the base classes, the data clusters are generated from Gaussian distributions with centroids at $[1.5, 3.0]$ (positive class) and $[-1.5, -3.0]$ (negative class), sharing a standard deviation of $0.4$.
Crucially, this setup creates a significantly larger margin \edit{(i.e., inter-class distance between centroids)}\footnote{***how to define the margin? \edit{***revised. We define the margin here as the inter-class distance between the centroids of the two classes along the respective axis.}} in the shortcut dimension (i.e., distance\footnote{***what distance? \edit{***see the last footnote.}} of $6.0$) compared to the semantic dimension (i.e., distance of $3.0$).
This disparity creates a strong inductive bias, as the classes are further apart along the shortcut axis, causing the model to prioritize $x_{\mathrm{sho}}$ as the primary discriminative feature to minimize the training loss.
Consequently, adversarial attacks (e.g., PGD attack \citep{adver_training}\footnote{***you did not mention this before. add references \edit{***revised.}}) naturally focus on this dominant direction, as perturbations along the axis with larger weights induce \edit{a larger increase in loss}\footnote{***loss increasing? \edit{***I replace ``induce the larger loss variation'' with ``induce a larger increase in loss''}}.

\noindent \textbf{Distribution Shift and Robustness Failure.}
For the new classes (test stage), we introduce a distribution shift where 
\edit{the centroid of the positive class shifts to $[1.5, -3.0]$ and that of the negative class changes to $[-1.5, 3.0]$.}\footnote{***this is not defined before. \edit{***I delete ``the spurious correlation is reversed:''}}
Crucially, the semantic feature $x_{\mathrm{sem}}$ remains invariant across this shift.
Figure \ref{fig:toy_example} (Right) visualizes the learned weight vectors (i.e., arrows) of different prompts. Note that the decision boundary is orthogonal to these weight vectors. \textit{Standard adversarial prompt tuning} (i.e., black arrow) is skewed heavily towards the y-axis, indicating it has overfitted the shortcut due to its larger margin.
While this yields high robustness on base classes, it leads to catastrophic failure (0\% robustness) on new classes, where the shortcut correlation is flipped.\footnote{***you could mention the robust generalization overfitting in this section. \edit{*** I added the blue sentence.}} \edit{This failure serves as a concrete manifestation of the robust generalization overfitting phenomenon.}

\noindent \textbf{Mechanism of ADAPT.}
Our ADAPT framework successfully disentangles these features.
The \textit{decoy prompt} (i.e., red arrow) is explicitly guided to align with the shortcut direction (i.e., y-axis), absorbing the non-robust features.
\edit{This alignment is enforced by a sparsity constraint (e.g., $\ell _1$ regularization) which forces the decoy prompt to learn the easiest shortcut feature. Since the shortcut dimension provides the most dominant gradient signal due to its larger margin, the decoy prompt greedily latches onto this single direction and suppresses the weaker semantic features.}
In contrast, constrained by the orthogonality to the decoy prompt, the \textit{target prompt} (i.e., blue arrow) is forced to align with the semantic direction (i.e., x-axis).
By effectively ignoring the volatile shortcut dimension, the target prompt achieves stable robustness on both base and new classes.
\footnote{***in this simple problem, it is possible that the target prompt is close to the y-axis and the decoy prompt is close to the x-axis since there is no hard prompt to regularize the learning of decoy prompt. \edit{***revised.}}

\begin{figure*}[t]
    \centering
    \includegraphics[width=0.9 \linewidth]{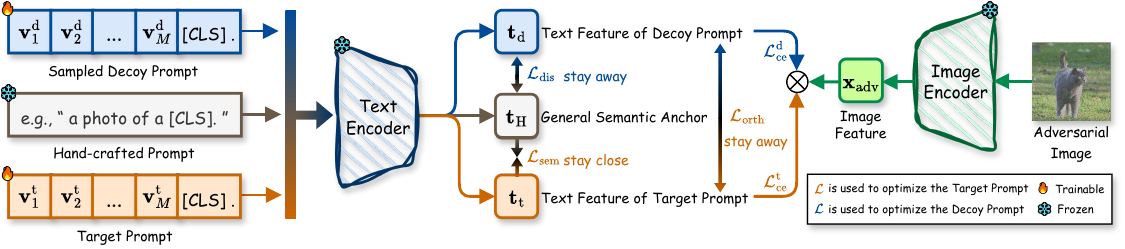}
    \caption{Overview of the proposed ADAPT method. ADAPT operates on a dual-prompt mechanism consisting of a \textit{Target Prompt} and a \textit{Decoy Prompt} (uniformly sampled from a pool).
The decoy prompt serves as a shortcut trap, optimized to absorb pseudo-robust features via $\mathcal{L}_\mathrm{ce}^\mathrm{d}$ while deviating from the general semantic anchor via $\mathcal{L}_\mathrm{dis}$.
Conversely, the target prompt is guided to learn robust features by enforcing orthogonality to the decoy prompt via $\mathcal{L}_\mathrm{orth}$ and maintaining semantic consistency with the general semantic anchor via $\mathcal{L}_\mathrm{sem}$.}
    \label{fig_framework}
\Description{A block diagram details the dual-prompt architecture and latent space loss constraints. On the left side, a sampled decoy prompt, a hand-crafted prompt, and a target prompt pass in parallel through a frozen text encoder. This generates three embeddings, including a decoy feature, a general semantic anchor, and a target feature. A dissimilarity loss pushes the decoy feature away from the semantic anchor. A semantic loss pulls the target feature toward the semantic anchor. An orthogonal loss forces the target feature away from the decoy feature. On the right side, an adversarial image passes through a frozen image encoder to generate an image feature. Independent cross-entropy losses then optimize this image feature against the decoy text feature and the target text feature.}
\end{figure*}

\subsection{ADAPT}
Motivated by the robust generalization overfitting phenomenon, we propose the ADAPT method. Unlike standard adversarial prompt tuning, which may bias the model \edit{to learn the pseudo-robust features},\footnote{***what do the shortcuts of seen classes mean? \edit{*** I replaced ``towards overfitting the shortcuts of seen classes'' with ``to learn the pseudo-robust features''. ``Shortcuts of seen classes'' refer to the non-generalizable features that are discriminative for the training (seen) categories but fail to transfer to unseen classes.}} ADAPT operates on a dual-prompt mechanism designed to disentangle robust features from pseudo-robust features. The overall architecture of ADAPT is illustrated in Figure \ref{fig_framework}.

\noindent \textbf{Dual-prompt mechanism.}
Existing APT learns a single \edit{learnable prompt}\footnote{***is it true? \edit{***I replaced ``continuous vector'' with ``learnable prompt''}}, which tends to learn pseudo-robust features that are discriminative for the adversarial examples of seen classes but are non-generalizable to unseen classes.
To address this, we introduce a \textbf{target prompt} and a \textbf{pool of decoy prompts}.
Specifically, the target prompt is defined as $\mathbf{P}_\mathrm{t} = \{ \mathbf{v}_1^\mathrm{t}, \dots, \mathbf{v}_M^\mathrm{t}, [\mathrm{CLS}] \}$.\footnote{***$\mathbf{v}_i^\mathrm{t}$ is a vector? There are $M$ vectors? If true, existing APT methods also learn $P_t$ which could include multiple continuous vector instead of a single one. \edit{***revised.}}
Considering that adversarial shortcuts may exhibit diverse patterns, a single decoy prompt may not suffice to capture all of them. Therefore, we construct a pool of $K$ decoy prompts $\mathcal{P}_\mathrm{d} = \{ \mathbf{P}_\mathrm{d}^1, \mathbf{P}_\mathrm{d}^2, \dots, \mathbf{P}_\mathrm{d}^K \}$.\footnote{***what is the form of each $\mathbf{P}_\mathrm{d}^i$? similar to $\mathbf{P}_t$? \edit{***revised.}} \edit{Each decoy prompt $\mathbf{P}_\mathrm{d}^k$ follows the same structure as the target prompt.} During inference, only the target prompt $\mathbf{P}_\mathrm{t}$ is used, ensuring no additional computational overhead compared to standard \edit{APT}\footnote{***adversarial prompt tuning? \edit{***revised.}}.

\noindent \textbf{Decoy prompt as a shortcut trap.}
We utilize decoy prompts as a trap to capture the pseudo-robust features.
In each training iteration, we uniformly sample an index $k \sim \mathcal{U}(1, K)$ to select an active decoy prompt $\mathbf{P}_\mathrm{d}^k$ from the pool. 
For the sake of brevity, we omit the index $k$ and refer to the currently active decoy prompt simply as $\mathbf{P}_\mathrm{d}=\{ \mathbf{v}_1^\mathrm{d}, \dots, \mathbf{v}_M^\mathrm{d}, [\mathrm{CLS}] \}$, and its corresponding textual embeddings to all classes as $\mathbf{t}_{\mathrm{d}}=\{\mathbf{t}_{\mathrm{d}}^{c}\}_{c=1}^{C}$.
It is worth noting that each decoy prompt in the pool is initialized independently and updated exclusively when selected.
This stochastic sampling strategy encourages the pool to learn a diverse set of shortcut directions, effectively covering the subspace of non-generalizable features.


To strictly enforce the decoy prompt to focus only on non-generalizable shortcuts, we aim to push the decoy prompt away from the general semantic space\footnote{***general semantic space seems undefined. \edit{***revised}} \edit{captured by hand-crafted prompts}. Specifically, we consider the text embeddings of the hand-crafted prompt as the general semantic anchor, denoted as $
\mathbf{t}_{\mathrm{H}}=\{\mathbf{t}_{\mathrm{H}}^{c}\}_{c=1}^{C}
$. 
Then, we explicitly maximize the dissimilarity between embeddings of the decoy prompt $\mathbf{t}_{\mathrm{d}}$ and the hand-crafted prompt $\mathbf{t}_{\mathrm{H}}$.
This can be achieved by minimizing the dissimilarity loss as
\begin{equation}
\mathcal{L}_\mathrm{dis}=\cos ( \mathbf{t}_{\mathrm{d}},\mathbf{t}_{\mathrm{H}}).
\label{eq_dis}
\end{equation}
Note that by minimizing Eq. \eqref{eq_dis}, we encourage the decoy prompt to deviate from the general semantic directions.
Simultaneously, the decoy prompt is required to minimize the classification loss on adversarial examples with the loss function as
\begin{equation}
\mathcal{L}_{\mathrm{ce}}^\mathrm{d} = \mathcal{L}_{\mathrm{ce}}(\mathbf{x}_{\mathrm{adv}}, \mathbf{t}_{\mathrm{d}}, \mathbf{y}).
\label{eq_decoy_ce}
\end{equation}
It is important to note that the adversarial examples $\mathbf{x}_{\mathrm{adv}}$ used here are generated at the beginning of the iteration using the current target prompt, thus capturing the specific perturbation patterns that successfully fool the target prompt.


By combining two losses in Eqs. \eqref{eq_dis} and \eqref{eq_decoy_ce}, the overall objective for learning the decoy prompt is formulated as
\begin{equation}
\mathcal{L}_\mathrm{total}^\mathrm{d} = \mathcal{L}_{\mathrm{ce}}^\mathrm{d} + \alpha \mathcal{L}_\mathrm{dis}.
\label{eq_decoy_total}
\end{equation}
By minimizing Eq.~\eqref{eq_decoy_total}, the decoy prompt is compelled to find a solution that satisfies the training objective (e.g., high robustness on seen classes) while being semantically distinct from general knowledge. Consequently, the decoy prompt becomes a dedicated trap for adversarial shortcuts.

\noindent \textbf{Learning target prompt via disentanglement.}
While the decoy prompt absorbs the shortcuts, the target prompt is guided to learn robust features. To achieve this, we impose an explicit orthogonal loss\footnote{***rigorously this cannot be termed constraint since this is only a regularizer to enforce to be close to orthogonality. \edit{***revised. I replace most constraint with loss.}} in the latent space.
Specifically, let $\mathbf{t}_{\mathrm{t}}=\{\mathbf{t}_{\mathrm{t}}^{c}\}_{c=1}^{C}$ denote the textual embeddings of the target prompt.
\edit{To capture the dominant optimization direction shared across classes,} we compute the global representation of the target and decoy prompts by averaging the text embeddings across all classes as $\mathbf{\bar{t}}_\mathrm{t} = \frac{1}{C} \sum_{c=1}^{C} \mathbf{t}_\mathrm{t}^c$ and $\mathbf{\bar{t}}_\mathrm{d} = \frac{1}{C} \sum_{c=1}^{C} \mathbf{t}_\mathrm{d}^c$. 
The orthogonal loss is formulated as\footnote{***why use global representation to define this loss \edit{***Pseudo-robust features (e.g., texture bias) often manifest as a systematic direction shared across classes. Averaging extracts this principal direction, allowing us to disentangle the prompt subspaces globally. \textbf{I added a blue sentence.}}}
\begin{equation}
\mathcal{L}_{\mathrm{orth}} = \cos^2(\mathbf{\bar{t}}_\mathrm{t}, \mathrm{sg}(\mathbf{\bar{t}}_\mathrm{d})),
\label{eq_orth}
\end{equation}
where $\mathrm{sg}(\cdot)$ denotes the stop-gradient operator. 
When Eq.~\eqref{eq_orth} reaches its minimum (i.e., 0), $\mathbf{\bar{t}}_\mathrm{t}$ becomes orthogonal to $\mathbf{\bar{t}}_\mathrm{d}$, achieving disentanglement between the target and decoy prompts. 
Here, we employ the stop-gradient operation on the decoy prompt, which ensures that the disentanglement process solely updates the target prompt without the interference of updating the decoy prompt.

Moreover, to maintain the semantic validity of the target prompt, we tether the target prompt to the general semantic anchor. 
Specifically, we can minimize the $\ell _1$ distance between the target prompt and the semantic anchor with the semantic loss formulated as
\begin{equation}
\mathcal{L}_{\mathrm{sem}} = \| \mathbf{t}_{\mathrm{t}} - \mathbf{t}_{\mathrm{H}} \|_1,
\label{eq_anchor}
\end{equation}
where $\|\cdot\|_1$ denotes the $\ell _1$ norm.
This loss ensures that the target prompt retains high-level semantic consistency with the hand-crafted prompt. 

By combining Eqs.~\eqref{eq_orth} and \eqref{eq_anchor} as well as the cross-entropy loss for classification, the overall objective for learning the target prompt is formulated as\footnote{***it is better to change $\lambda$ to $\gamma$ if $\gamma$ is not used elsewhere. \edit{$\gamma$ is used in Theorem}}
\begin{equation}
\mathcal{L}_{\mathrm{total}}^\mathrm{t} = \mathcal{L}_{\mathrm{ce}}^\mathrm{t} + \beta \mathcal{L}_{\mathrm{sem}} + \lambda \mathcal{L}_{\mathrm{orth}},
\label{eq_target_total}
\end{equation}
where $\mathcal{L}_{\mathrm{ce}}^\mathrm{t} = \mathcal{L}_{\mathrm{ce}}(\mathbf{x}_{\mathrm{adv}}, \mathbf{t}_{\mathrm{t}}, \mathbf{y})$. 

In summary, for the entire training process, the ADAPT method \edit{adopts an alternating optimization strategy that minimizes Eqs. \eqref{eq_decoy_total} and \eqref{eq_target_total} to update the sampled decoy prompt and the target prompt, respectively.}
The complete algorithm for the proposed ADAPT method is shown in Algorithm \ref{alg:ADAPT}.

\begin{algorithm}
    \caption{Adversarial DisentAngled Prompt Tuning (ADAPT)}
    \label{alg:ADAPT}
    \begin{algorithmic}[1]
        \STATE \textbf{Input:} Training dataset $\mathcal{D}$, CLIP image encoder $\mathcal{E}_\mathrm{I}$ and text encoder $\mathcal{E}_\mathrm{T}$, textual embeddings of hand-crafted prompts $\mathbf{t}_{\mathrm{H}}$, learning rate $\eta$, perturbation budget $\epsilon$, hyper-parameters $\alpha, \beta, \lambda$.
        \STATE \textbf{Initialize:} Target prompt $\mathbf{P}_\mathrm{t}$, a pool of decoy prompts $\mathcal{P}_\mathrm{d} = \{ \mathbf{P}_\mathrm{d}^1, \dots, \mathbf{P}_\mathrm{d}^K \}$.
        \FOR {each epoch}
            \FOR {minibatch $\mathbf{x}, \mathbf{y}$ in $\mathbb{D}$}
                \STATE \textcolor{gray}{// 1. Generate Adversarial Examples using Target Prompt}
                \STATE Compute textual embeddings of target prompt $\mathbf{t}_{\mathrm{t}} = \mathcal{E}_\mathrm{T}(\mathbf{P}_\mathrm{t})$;
                
                \STATE Generate adversarial examples:
                \STATE \quad $\mathbf{x}_{\mathrm{adv}}\gets\underset{\mathbf{x}_{\mathrm{adv}}}{\mathrm{arg}\max}\,\mathcal{L} _{\mathrm{ce}}( \mathbf{x}_{\mathrm{adv}},\mathbf{t}_{\mathrm{t}},\mathbf{y} ) ,\,\,\mathrm{s}.\mathrm{t}.\parallel \mathbf{x}_{\mathrm{adv}}-\mathbf{x}\parallel _p\leqslant \epsilon$;
                
                \STATE \textcolor{gray}{// 2. Update Sampled Decoy Prompt}
                \STATE Uniformly sample an index $k \sim \mathcal{U}(1, K)$;
                \STATE Get the active decoy prompt $\mathbf{P}_\mathrm{d}^k$ and compute $\mathbf{t}_{\mathrm{d}} = \mathcal{E}_\mathrm{T}(\mathbf{P}_\mathrm{d}^k)$;
                
                \STATE Calculate loss of the decoy prompt: 
                \STATE \quad $\mathcal{L}_\mathrm{total}^\mathrm{d} = \mathcal{L}_{\mathrm{ce}}(\mathbf{x}_{\mathrm{adv}}, \mathbf{t}_{\mathrm{d}}, \mathbf{y}) + \alpha \cos(\mathbf{t}_{\mathrm{d}}, \mathbf{t}_{\mathrm{H}})$
                
                \STATE Update $\mathbf{P}_\mathrm{d}^k \leftarrow \mathbf{P}_\mathrm{d}^k - \eta \nabla_{\mathbf{P}_\mathrm{d}^k} \mathcal{L}_\mathrm{total}^\mathrm{d}$;
                
                \STATE \textcolor{gray}{// 3. Update Target Prompt}
                \STATE Calculate loss of the target prompt:
                \STATE \quad $\mathcal{L}_{\mathrm{total}}^\mathrm{t} = \mathcal{L}_{\mathrm{ce}}(\mathbf{x}_{\mathrm{adv}}, \mathbf{t}_{\mathrm{t}}, \mathbf{y}) + \beta \|\mathbf{t}_{\mathrm{t}} - \mathbf{t}_{\mathrm{H}}\|_1 + \lambda \cos^2(\mathbf{\bar{t}}_\mathrm{t}, \mathrm{sg}(\mathbf{\bar{t}}_\mathrm{d}))$
                
                \STATE Update $\mathbf{P}_\mathrm{t} \leftarrow \mathbf{P}_\mathrm{t} - \eta \nabla_{\mathbf{P}_\mathrm{t}} \mathcal{L}_{\mathrm{total}}^\mathrm{t}$;
            \ENDFOR
        \ENDFOR
    \end{algorithmic}
\end{algorithm}

\subsection{Theoretical Analysis} 

In this section, we analyze whether loss functions in ADAPT, the semantic loss in Eq.~\eqref{eq_anchor} and the orthogonal loss in Eq.~\eqref{eq_orth}, can guarantee that the target classifier remains robust to shortcut shifts\footnote{***what do shortcut shifts mean? \edit{***shortcut shifts refers to a change in the distribution of non-semantic features used for prediction from base to new classes.}} on unseen classes.

\edit{All CLIP image/text embeddings are $\ell_2$-normalized in this section unless stated otherwise.}
Consider that the feature space $\mathbb{R}^d$ can be decomposed into two orthogonal subspaces: a semantic subspace $\mathcal{S}$ and a shortcut subspace $\mathcal{U}$ (i.e., $\mathbb{R}^d = \mathcal{S} \oplus \mathcal{U}$). Let the image feature\footnote{***what does normalized mean? why need the normalized features in the analysis but not in the method?} of an input $\mathbf{x}$ be decomposed as ${z}(\mathbf{x}) = s(\mathbf{x}) + u(\mathbf{x})$, where $s(\mathbf{x}) \in \mathcal{S}$ represents invariant semantics and $u(\mathbf{x}) \in \mathcal{U}$ represents distinct shortcut patterns (with $\|u(\mathbf{x})\|_2 \le B$). We assume that the semantic anchors $\{{\mathbf{t}}_{\mathrm{H}}^{c}\}_{c=1}^{C}$\footnote{***why use a different notation $\tilde{\mathbf{t}}_{\mathrm{H}}^{c}$ \edit{I add a declaration ``All CLIP image/text embeddings are $\ell_2$-normalized unless stated otherwise.'', and use the same notation as in the method.}} lie in $\mathcal{S}$ and provide a sufficient margin for classification as in the following assumption.
\begin{assumption}[Semantic Separability]
\label{assum:margin}
There exists a margin $\gamma > 0$ such that for any new-class sample $(\mathbf{x}, y)$, the semantic component $s(\mathbf{x})$ is correctly classified by the anchors with probability at least $1-\zeta$. Specifically, the following inequality holds with probability at least $1-\zeta$\footnote{***I added. is it what you want to express?}
\begin{equation}
    m_{\mathrm{H}}(\mathbf{x},y) :=\,\,\min_{c\ne y} \,\,\langle s(\mathbf{x}),\,{\mathbf{t}}_{\mathrm{H}}^{y}-{\mathbf{t}}_{\mathrm{H}}^{c}\rangle \,\,\ge \,\,\gamma,
\end{equation}
\end{assumption}
\edit{where $< \cdot ,\cdot >$ denotes the inner product operation.}

Under such settings, Theorem~\ref{thm:adapt_main} (proved in Appendix~\ref{app:theory}) shows that if the target prompt is sufficiently close to the semantic anchor (enforced by $\mathcal{L}_\mathrm{sem}$) and orthogonal to the shortcut subspace (enforced by $\mathcal{L}_\mathrm{orth}$), the robustness \edit{carries over to the target classifier on unseen classes}.\footnote{***what does `transfer' mean? \edit{The target classifier makes the same correct prediction when the shortcut component $u(\mathbf{x})$ shifts (within $\|u(\mathbf{x})\|_2\le B$), yielding a new-class error bound of at most $\delta$.}}

\begin{theorem}[Robustness Generalization]
\label{thm:adapt_main}
Define the semantic deviation $\varepsilon_{\mathrm{sem}} := \max_{c} \|{\mathbf{t}}_{\mathrm{t}}^{c} - {\mathbf{t}}_{\mathrm{H}}^{c}\|_2$ and the shortcut leakage $\varepsilon_{\mathcal{U}} := \max_{c} \|P_{\mathcal{U}}({\mathbf{t}}_{\mathrm{t}}^{c})\|_2$,\footnote{***I changed $P_{\mathcal{U}}$ to a function. You need to change it in elsewhere it appears, for example, the proof. \edit{***revised.}}\footnote{***Are $\varepsilon_{\mathrm{sem}}$ and $\varepsilon_{\mathcal{U}}$ related to the two losses? seems to use different norms. \edit{***$\mathcal{L}_{\mathrm{sem}}$ (Eq.~\ref{eq_anchor}) encourages small $\varepsilon_{\mathrm{sem}}$, using l2-norm here is to facilitate its compatibility with Cauchy–Schwarz,
while $\mathcal{L}_{\mathrm{orth}}$ (Eq.~\ref{eq_orth}) reduces the projection of target prompts onto shortcut directions,
captured by $\varepsilon_{\mathcal{U}}$.}} where $P_{\mathcal{U}}(\cdot)$ is to project onto $\mathcal{U}$. 
If the \edit{target} prompts\footnote{***which prompts? \edit{***revised}} satisfy
\begin{equation}
    \gamma > 2\varepsilon_{\mathrm{sem}} + 2B\varepsilon_{\mathcal{U}},
\end{equation}
then the target classifier $\hat{y}(\mathbf{x}) = \arg\max_{c} \langle {z}(\mathbf{x}), {\mathbf{t}}_{\mathrm{t}}^{c} \rangle$ is invariant\footnote{***what does invariant mean? \edit{***I add an explanation}} to any shortcut shifts\footnote{***shortcut shift or shortcut? \edit{***I add an explanation}}, \edit{in the sense that its prediction does not change under arbitrary variations of the shortcut component}. Consequently, the \edit{testing} error rate\footnote{***what error rate? training or testing? \edit{revised.}} on new classes is bounded by $\zeta$.
\end{theorem}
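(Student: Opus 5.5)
The plan is a margin-perturbation argument. Fix a new-class sample $(\mathbf{x},y)$ on the event of Assumption~\ref{assum:margin}, which has probability at least $1-\zeta$, so that $m_{\mathrm{H}}(\mathbf{x},y)\ge\gamma$. For each $c\neq y$ we will show that the target margin
\begin{equation*}
\langle z(\mathbf{x}),\mathbf{t}_{\mathrm{t}}^{y}-\mathbf{t}_{\mathrm{t}}^{c}\rangle
\end{equation*}
is strictly positive for every shortcut component $u(\mathbf{x})\in\mathcal{U}$ with $\|u(\mathbf{x})\|_2\le B$. Then $\hat{y}(\mathbf{x})=y$ regardless of how $u$ shifts, which is exactly the claimed invariance. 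The error rate on new classes is therefore at most the probability of the complementary event, namely $\zeta$.

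First we split $z(\mathbf{x})=s(\mathbf{x})+u(\mathbf{x})$ and treat the two inner products separately. For the semantic part we write
\begin{equation*}
\langle s,\mathbf{t}_{\mathrm{t}}^{y}-\mathbf{t}_{\mathrm{t}}^{c}\rangle=\langle s,\mathbf{t}_{\mathrm{H}}^{y}-\mathbf{t}_{\mathrm{H}}^{c}\rangle+\langle s,(\mathbf{t}_{\mathrm{t}}^{y}-\mathbf{t}_{\mathrm{H}}^{y})-(\mathbf{t}_{\mathrm{t}}^{c}-\mathbf{t}_{\mathrm{H}}^{c})\rangle .
\end{equation*}
The first term is at least $\gamma$ by the assumption. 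The second term is bounded below by $-2\|s\|_2\,\varepsilon_{\mathrm{sem}}$ via Cauchy--Schwarz and the triangle inequality. Since $z(\mathbf{x})$ is $\ell_2$-normalized and $s\perp u$, we have $\|s\|_2\le\|z\|_2=1$, so this term is at least $-2\varepsilon_{\mathrm{sem}}$.

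For the shortcut part, $u\in\mathcal{U}$ gives
\begin{equation*}
\langle u,\mathbf{t}_{\mathrm{t}}^{y}-\mathbf{t}_{\mathrm{t}}^{c}\rangle=\langle u,P_{\mathcal{U}}(\mathbf{t}_{\mathrm{t}}^{y})-P_{\mathcal{U}}(\mathbf{t}_{\mathrm{t}}^{c})\rangle ,
\end{equation*}
because $P_{\mathcal{U}}$ is the orthogonal projection and $\mathcal{S}\perp\mathcal{U}$. Cauchy--Schwarz with $\|u\|_2\le B$ then bounds its absolute value by $2B\varepsilon_{\mathcal{U}}$. Adding the two estimates, the target margin is at least $\gamma-2\varepsilon_{\mathrm{sem}}-2B\varepsilon_{\mathcal{U}}>0$ by the hypothesis. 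This lower bound holds uniformly over all admissible $u$, so both the sign and the argmax are preserved under arbitrary shortcut variation.

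The main obstacle is the step bounding $\|s(\mathbf{x})\|_2$. If the image feature is treated as normalized, $\|s\|_2\le1$ follows from orthogonality of the decomposition, but the statement also leaves $u$ free up to norm $B$. I would interpret the shortcut shift as varying $u$ with $s$ fixed and the normalization imposed on the original feature only, and state $\|s\|_2\le1$ as the standing convention. If that reading is not intended, the constant $2$ in front of $\varepsilon_{\mathrm{sem}}$ would need to be replaced by $2\|s\|_2$.

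A smaller subtlety is that the argmax is over the inner product rather than over cosine similarity. With $\ell_2$-normalized text embeddings the two coincide up to the positive factor $1/\|z\|_2$, so the classifier is unchanged.
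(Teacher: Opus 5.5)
Your proposal is correct and follows essentially the same argument as the paper's proof: split the score gap $\langle z(\mathbf{x}),\mathbf{t}_{\mathrm{t}}^{y}-\mathbf{t}_{\mathrm{t}}^{c}\rangle$ into semantic and shortcut parts, bound them below by $\gamma-2\varepsilon_{\mathrm{sem}}$ (using $\|s(\mathbf{x})\|_2\le 1$) and $-2B\varepsilon_{\mathcal{U}}$ (projection plus Cauchy--Schwarz), and conclude that the error is at most $\zeta$ on the margin event. The paper resolves your $\|s(\mathbf{x})\|_2\le 1$ subtlety the same way, deriving it from the normalization of $z(\mathbf{x})$ and the orthogonality of the decomposition.
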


\begin{table*}[!t]
\renewcommand{\arraystretch}{1}
\centering
\caption{Performance under the setting of adversarial base-to-new generalization. All methods (except zero-shot TeCoA) are trained with 16 instances per base class. Here $\epsilon$ equals $4/255$. Results of $\epsilon=1/255$ are provided in Appendix \ref{appendix:b2n}.}
\captionsetup[sub]{skip=1pt}
\subfloat[\textbf{Average over 11 datasets}]{
    \begin{minipage}[t]{0.32\textwidth}
    \scalebox{0.72}{
    \setlength{\tabcolsep}{1.5mm}{
    \begin{tabular}{l|cc|cc|c}
    \toprule
    \multicolumn{1}{c|}{\multirow{2}[4]{*}{Methods}} & \multicolumn{2}{c|}{Base} & \multicolumn{2}{c|}{New} & \multirow{2}[4]{*}{$\mathrm{H}_\mathrm{b}$} \\
\cmidrule{2-5}          & Acc.   & Rob.   & Acc.   & Rob.   &  \\
    \midrule
    TeCoA & 44.51  & 14.62  & 42.16 & 16.14  & 22.66  \\
    \midrule
    APT   & 58.62  & 27.40  & 30.13  & 10.96  & 22.47  \\
    \rowcolor{Gray} ADAPT & 60.04 & 27.48  & 38.98  & 15.65  & 28.05  \\
    \midrule
    FAP   & 58.00  & 27.96  & 39.77  & 16.17  & 28.57  \\
    \rowcolor{Gray} ADAPT$_{\text M}$ & 56.82  & 28.52 & 41.39  & 17.76 & \textbf{30.05} \\
    \bottomrule
    \end{tabular}}}
    \end{minipage}
}
\hfill
\subfloat[ImageNet]{
    \begin{minipage}[t]{0.32\textwidth}
    \scalebox{0.72}{
    \setlength{\tabcolsep}{1.5mm}{
    \begin{tabular}{l|cc|cc|c}
    \toprule
    \multicolumn{1}{c|}{\multirow{2}[4]{*}{Methods}} & \multicolumn{2}{c|}{Base} & \multicolumn{2}{c|}{New} & \multirow{2}[4]{*}{$\mathrm{H}_\mathrm{b}$} \\
\cmidrule{2-5}          & Acc.   & Rob.   & Acc.   & Rob.   &  \\
    \midrule
    TeCoA & 43.65  & 11.36  & {44.95} & 13.67  & 19.39  \\
    \midrule
    APT   & 44.54  & 13.78  & 38.56  & 12.47  & 19.89  \\
    \rowcolor{Gray} ADAPT & {47.07} & 13.49  & 43.36  & 13.71  & 20.90  \\
    \midrule
    FAP   & 43.29  & {13.89} & 37.43  & 13.05  & 20.16  \\
    \rowcolor{Gray} ADAPT$_{\text M}$ & 41.67  & 13.88  & 39.47  & {14.49} & \textbf{21.01} \\
    \bottomrule
    \end{tabular}}}
    \end{minipage}
}
\hfill
\subfloat[Caltech101]{
    \begin{minipage}[t]{0.32\textwidth}
    \scalebox{0.72}{
    \setlength{\tabcolsep}{1.5mm}{
    \begin{tabular}{l|cc|cc|c}
    \toprule
    \multicolumn{1}{c|}{\multirow{2}[4]{*}{Methods}} & \multicolumn{2}{c|}{Base} & \multicolumn{2}{c|}{New} & \multirow{2}[4]{*}{$\mathrm{H}_\mathrm{b}$} \\
\cmidrule{2-5}          & Acc.   & Rob.   & Acc.   & Rob.   &  \\
    \midrule
    TeCoA & 85.41  & 49.45  & {83.08} & {54.37} & 64.14  \\
    \midrule
    APT   & 90.77  & 65.07  & 69.21  & 42.03  & 61.89  \\
    \rowcolor{Gray} ADAPT & {91.28} & 65.91  & 82.21  & 51.42  & \textbf{69.28} \\
    \midrule
    FAP   & 85.22  & 63.72  & 70.20  & 44.87  & 62.54  \\
    \rowcolor{Gray} ADAPT$_{\text M}$ & 89.74  & {66.24} & 79.15  & 52.29  & 68.97  \\
    \bottomrule
    \end{tabular}}}
    \end{minipage}
}
\\
\subfloat[OxfordPets]{
    \begin{minipage}[t]{0.32\textwidth}
    \scalebox{0.72}{
    \setlength{\tabcolsep}{1.5mm}{
    \begin{tabular}{l|cc|cc|c}
    \toprule
    \multicolumn{1}{c|}{\multirow{2}[4]{*}{Methods}} & \multicolumn{2}{c|}{Base} & \multicolumn{2}{c|}{New} & \multirow{2}[4]{*}{$\mathrm{H}_\mathrm{b}$} \\
\cmidrule{2-5}          & Acc.   & Rob.   & Acc.   & Rob.   &  \\
    \midrule
    TeCoA & 74.11  & 20.15  & {82.10} & 29.25  & 36.53  \\
    \midrule
    APT   & 74.22  & 27.38  & 73.04  & 23.21  & 37.46  \\
    \rowcolor{Gray} ADAPT & {79.00} & 27.59  & 76.17  & 27.96  & 40.90  \\
    \midrule
    FAP   & 74.43  & {31.84} & 68.51  & 26.51  & 41.17  \\
    \rowcolor{Gray} ADAPT$_{\text M}$ & 77.35  & 31.53  & 74.22  & {33.00} & \textbf{45.24} \\
    \bottomrule
    \end{tabular}}}
    \end{minipage}
}
\hfill
\subfloat[StanfordCars]{
    \begin{minipage}[t]{0.32\textwidth}
    \scalebox{0.72}{
    \setlength{\tabcolsep}{1.5mm}{
    \begin{tabular}{l|cc|cc|c}
    \toprule
    \multicolumn{1}{c|}{\multirow{2}[4]{*}{Methods}} & \multicolumn{2}{c|}{Base} & \multicolumn{2}{c|}{New} & \multirow{2}[4]{*}{$\mathrm{H}_\mathrm{b}$} \\
\cmidrule{2-5}          & Acc.   & Rob.   & Acc.   & Rob.   &  \\
    \midrule
    TeCoA & 12.49  & 2.10  & 17.38  & 2.08  & 3.65  \\
    \midrule
    APT   & 40.23  & {10.27} & 15.72  & 3.14  & 7.93  \\
    \rowcolor{Gray} ADAPT & 42.93  & 9.92  & 19.76  & 3.79  & 9.12  \\
    \midrule
    FAP   & {45.75} & 8.87  & 34.41  & 5.64  & 11.73  \\
    \rowcolor{Gray} ADAPT$_{\text M}$ & 38.81  & 8.10  & {36.35} & {7.18} & \textbf{12.66} \\
    \bottomrule
    \end{tabular}}}
    \end{minipage}
}
\hfill
\subfloat[Flowers102]{
    \begin{minipage}[t]{0.32\textwidth}
    \scalebox{0.72}{
    \setlength{\tabcolsep}{1.5mm}{
    \begin{tabular}{l|cc|cc|c}
    \toprule
    \multicolumn{1}{c|}{\multirow{2}[4]{*}{Methods}} & \multicolumn{2}{c|}{Base} & \multicolumn{2}{c|}{New} & \multirow{2}[4]{*}{$\mathrm{H}_\mathrm{b}$} \\
\cmidrule{2-5}          & Acc.   & Rob.   & Acc.   & Rob.   &  \\
    \midrule
    TeCoA & 40.65  & 14.81  & {38.94} & {10.28} & 18.60  \\
    \midrule
    APT   & {84.90} & {49.57} & 19.65  & 4.75  & 13.63  \\
    \rowcolor{Gray} ADAPT & 82.15  & 48.53  & 24.18  & 6.88  & 18.22  \\
    \midrule
    FAP   & 75.31  & 45.96  & 27.73  & 7.94  & 20.30  \\
    \rowcolor{Gray} ADAPT$_{\text M}$ & 78.73  & 45.39  & 28.79  & 9.50  & \textbf{22.89} \\
    \bottomrule
    \end{tabular}}}
    \end{minipage}
}
\hfill
\\
\subfloat[Food101]{
    \begin{minipage}[t]{0.32\textwidth}
    \scalebox{0.72}{
    \setlength{\tabcolsep}{1.5mm}{
    \begin{tabular}{l|cc|cc|c}
    \toprule
    \multicolumn{1}{c|}{\multirow{2}[4]{*}{Methods}} & \multicolumn{2}{c|}{Base} & \multicolumn{2}{c|}{New} & \multirow{2}[4]{*}{$\mathrm{H}_\mathrm{b}$} \\
\cmidrule{2-5}          & Acc.   & Rob.   & Acc.   & Rob.   &  \\
    \midrule
    TeCoA & {59.49} & 4.92  & 32.19  & 5.61  & 9.32  \\
    \midrule
    APT   & 37.22  & 10.11  & 18.61  & 3.79  & 9.02  \\
    \rowcolor{Gray} ADAPT & 42.37  & 10.69  & 30.09  & 6.78  & 13.43  \\
    \midrule
    FAP   & 50.39  & 11.92  & {42.72} & {10.37} & \textbf{17.89} \\
    \rowcolor{Gray} ADAPT$_{\text M}$ & 41.39  & {13.33} & 31.52  & 9.49  & 16.93  \\
    \bottomrule
    \end{tabular}}}
    \end{minipage}
}
\hfill
\subfloat[FGVCAircraft]{
    \begin{minipage}[t]{0.32\textwidth}
    \scalebox{0.72}{
    \setlength{\tabcolsep}{1.5mm}{
    \begin{tabular}{l|cc|cc|c}
    \toprule
    \multicolumn{1}{c|}{\multirow{2}[4]{*}{Methods}} & \multicolumn{2}{c|}{Base} & \multicolumn{2}{c|}{New} & \multirow{2}[4]{*}{$\mathrm{H}_\mathrm{b}$} \\
\cmidrule{2-5}          & Acc.   & Rob.   & Acc.   & Rob.   &  \\
    \midrule
    TeCoA & 10.26  & { 0.66}  & 9.96  & 0.90  & { 1.42}  \\
    \midrule
    APT   & {21.19} & { 7.08}  & 5.82  & 2.22  & { 4.93}  \\
    \rowcolor{Gray} ADAPT & 20.17  & { 6.36}  & 12.42  & 2.70  & { 6.08}  \\
    \midrule
    FAP   & 21.13  & { 6.90}  & {15.78} & {3.90} & { 7.81}  \\
    \rowcolor{Gray} ADAPT$_{\text M}$ & 19.93  & { 8.52} & 13.62  & 3.78  & \textbf{ 7.91} \\
    \bottomrule
    \end{tabular}}}
    \end{minipage}
}
\hfill
\subfloat[SUN397]{
    \begin{minipage}[t]{0.32\textwidth}
    \scalebox{0.72}{
    \setlength{\tabcolsep}{1.5mm}{
    \begin{tabular}{l|cc|cc|c}
    \toprule
    \multicolumn{1}{c|}{\multirow{2}[4]{*}{Methods}} & \multicolumn{2}{c|}{Base} & \multicolumn{2}{c|}{New} & \multirow{2}[4]{*}{$\mathrm{H}_\mathrm{b}$} \\
\cmidrule{2-5}          & Acc.   & Rob.   & Acc.   & Rob.   &  \\
    \midrule
    TeCoA & 39.81  & 7.91  & {45.36} & 10.84  & 15.05  \\
    \midrule
    APT   & 53.40  & 14.27  & 30.08  & 6.97  & 15.06  \\
    \rowcolor{Gray} ADAPT & {55.08} & 14.54  & 37.25  & 9.07  & 17.85  \\
    \midrule
    FAP   & 50.88  & {15.17} & 41.31  & 12.42  & 21.02  \\
    \rowcolor{Gray} ADAPT$_{\text M}$ & 48.35  & 14.40  & 44.05  & {13.55} & \textbf{21.43} \\
    \bottomrule
    \end{tabular}}}
    \end{minipage}
}
\hfill
\\
\subfloat[DTD]{
    \begin{minipage}[t]{0.32\textwidth}
    \scalebox{0.72}{
    \setlength{\tabcolsep}{1.5mm}{
    \begin{tabular}{l|cc|cc|c}
    \toprule
    \multicolumn{1}{c|}{\multirow{2}[4]{*}{Methods}} & \multicolumn{2}{c|}{Base} & \multicolumn{2}{c|}{New} & \multirow{2}[4]{*}{$\mathrm{H}_\mathrm{b}$} \\
\cmidrule{2-5}          & Acc.   & Rob.   & Acc.   & Rob.   &  \\
    \midrule
    TeCoA & 32.87  & 16.09  & 34.30  & 18.48  & 22.75  \\
    \midrule
    APT   & 53.36  & 25.35  & 23.67  & 10.75  & 20.68  \\
    \rowcolor{Gray} ADAPT & 54.63  & 28.01  & 30.31  & 15.22  & 26.19  \\
    \midrule
    FAP   & 54.17  & 28.94  & 26.93  & 15.10  & 25.58  \\
    \rowcolor{Gray} ADAPT$_{\text M}$ & {55.21} & {32.06} & {34.90} & {20.05} & \textbf{31.29} \\
    \bottomrule
    \end{tabular}}}
    \end{minipage}
}
\hfill
\subfloat[EuroSAT]{
    \begin{minipage}[t]{0.32\textwidth}
    \scalebox{0.72}{
    \setlength{\tabcolsep}{1.5mm}{
    \begin{tabular}{l|cc|cc|c}
    \toprule
    \multicolumn{1}{c|}{\multirow{2}[4]{*}{Methods}} & \multicolumn{2}{c|}{Base} & \multicolumn{2}{c|}{New} & \multirow{2}[4]{*}{$\mathrm{H}_\mathrm{b}$} \\
\cmidrule{2-5}          & Acc.   & Rob.   & Acc.   & Rob.   &  \\
    \midrule
    TeCoA & 48.19  & 23.48  & 30.85  & 22.36  & 28.47  \\
    \midrule
    APT   & {81.76} & 54.81  & 11.95  & 5.33  & 13.25  \\
    \rowcolor{Gray} ADAPT & 78.86  & 52.14  & 42.64  & 26.56  & 43.03  \\
    \midrule
    FAP   & 79.76  & 54.86  & {43.26} & {28.00} & \textbf{44.64} \\
    \rowcolor{Gray} ADAPT$_{\text M}$ & 76.29  & {55.43} & 37.18  & 20.05  & 37.06  \\
    \bottomrule
    \end{tabular}}}
    \end{minipage}
}
\hfill
\subfloat[UCF101]{
    \begin{minipage}[t]{0.32\textwidth}
    \scalebox{0.72}{
    \setlength{\tabcolsep}{1.5mm}{
    \begin{tabular}{l|cc|cc|c}
    \toprule
    \multicolumn{1}{c|}{\multirow{2}[4]{*}{Methods}} & \multicolumn{2}{c|}{Base} & \multicolumn{2}{c|}{New} & \multirow{2}[4]{*}{$\mathrm{H}_\mathrm{b}$} \\
\cmidrule{2-5}          & Acc.   & Rob.   & Acc.   & Rob.   &  \\
    \midrule
    TeCoA & 42.71  & 9.88  & {44.67} & 9.73  & 16.01  \\
    \midrule
    APT   & 63.19  & 23.73  & 25.15  & 5.90  & 14.97  \\
    \rowcolor{Gray} ADAPT & {66.86} & 25.13  & 30.34  & 8.11  & 18.96  \\
    \midrule
    FAP   & 57.70  & {25.49} & 29.15  & 10.06  & 21.02  \\
    \rowcolor{Gray} ADAPT$_{\text M}$ & 57.55  & 24.87  & 36.02  & {12.01} & \textbf{23.72} \\
    \bottomrule
    \end{tabular}}}
    \end{minipage}
}
\label{table_base2new}
\end{table*}

\section{Experiments}

\subsection{Setups}
\label{sec:setup}
\noindent \textbf{Datasets.}
Following the protocols established in \citep{coop,fap,apt}, we conduct experiments under four distinct settings: adversarial base-to-new generalization, adversarial few-shot classification, adversarial cross-dataset generalization, and adversarial domain generalization.
The first three settings are evaluated on a standard suite of 11 datasets. 
For the setting of adversarial domain generalization, we train on ImageNet \citep{imagenet} and evaluate on its out-of-distribution (OOD) variants. 
Details are provided in Appendix \ref{appendix:Implementation_Details}.

\noindent \textbf{Baselines.}
We compare ADAPT with state-of-the-art adversarial defense strategies for VLMs. Specifically, we select zero-shot TeCoA \citep{tecoa} as the representative for robust fine-tuning.
For adversarial prompt tuning\footnote{***adversarial prompt tuning? \edit{revised}} methods, we compare against APT \citep{apt}, which optimizes textual prompts, and FAP \citep{fap}, which incorporates multi-modal prompts.
These baselines cover the primary paradigms in robust VLM research.

Since ADAPT focuses on optimizing textual prompts, it has fewer learnable parameters than multi-modal prompt tuning methods such as FAP. For a fair comparison and to further explore the potential of our method, we extend our method to a multi-modal prompt version, termed ADAPT$_{\text M}$.\footnote{***what about ADAPT$_{\text M}$? \edit{***revised.}} Drawing inspiration from\footnote{***it is better not to mention this method as you did not compare with it\edit{***revised.}} FAP \citep{fap}, ADAPT$_{\text M}$ establishes a deep coupling mechanism between vision and language branches.
\edit{Specifically, ADAPT$_{\text M}$ maintains two sets of learnable deep visual prompts for the target and decoy prompts, respectively. For each layer, these visual prompts are mapped into the text embedding space through a lightweight cross-modal projection, producing the corresponding deep textual prompts. The deep visual and text prompts are then injected into the input sequence of the corresponding transformer block. In this way, the two branches are coupled layer-by-layer. Thus, both the target and the decoy have their own multi-modal prompts.}\footnote{***this part is unclear to me. need more details.\edit{***revised}}

\noindent \textbf{Adversarial training and evaluation.}
For adversarial training and evaluation, we employ PGD attack \citep{adver_training} under the $\ell_{\infty}$ threat model. Consistent with prior works \citep{apt,tecoa,fare}, we conduct experiments under two perturbation budgets of $\epsilon = 1/255$ and $\epsilon = 4/255$.
During training, we generate adversarial examples using a 3-step PGD with a step size of $2\epsilon/3$.
For evaluation, we employ a 100-step PGD with a step size of $\epsilon/4$ and random initialization.
More implementation details are provided in Appendix \ref{appendix:Implementation_Details}.\footnote{***you need to introduce the evaluation metrics especially `$\mathrm{H}_\mathrm{b}$' in somewhere of the main body. you cannot introduce them in the caption of tables or figures. \edit{***revised.}}

\subsection{Main Results}
\noindent \textbf{Adversarial base-to-new generalization.}
In this scenario, we evaluate the robust base-to-new transferability of the learned prompts by splitting datasets into disjoint base (seen) and new (unseen) classes. Models are trained solely on base classes and evaluated on both sets. Table \ref{table_base2new} reports the detailed performance across 11 datasets, \edit{where `$\mathrm{H}_\mathrm{b}$' denotes the harmonic mean computed over
accuracy and robustness on base classes and on new classes.}
We can see that APT suffers from severe robust generalization overfitting, characterized by high performance on base classes but a sharp decline on new classes.
In contrast, ADAPT effectively mitigates this issue.
On the challenging new classes, ADAPT outperforms APT by an average of 8.85\% in accuracy and 4.69\% in robustness, resulting in a substantial gain of 5.58\% in the `$\mathrm{H}_\mathrm{b}$' metric.
Furthermore, this advantage extends to the multi-modal prompting, where ADAPT$_{\text M}$ consistently surpasses the state-of-the-art FAP, achieving improvements of 1.62\% and 1.59\% in terms of the accuracy and robustness on new classes, respectively.
Crucially, the gains in robust generalization do not come at the cost of performance degradation on base classes. ADAPT achieves the highest average accuracy (i.e., 60.04\%) on base classes among all baseline methods, surpassing APT and FAP by 1.42\% and 2.04\%, respectively, and performs comparably in terms of robustness.
ADAPT$_{\text M}$ achieves the highest average robustness on base classes (i.e., 28.52\%).
Those results demonstrate that by effectively disentangling robust features from shortcuts, the proposed method achieves the best trade-off between accuracy and robustness on base and new classes.

\begin{figure}[!t]
\centering
\includegraphics[width=0.82\linewidth]{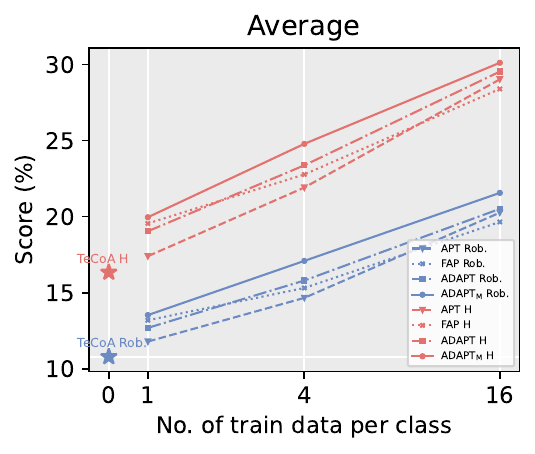}
\caption{Average performance on the 11 datasets under the adversarial few-shot classification setting. Since accuracies are comparable across methods, we plot only robustness and `H' for clarity. 
$\epsilon=4/255$. Full results are provided in Appendix \ref{appendix:fewshot}.}
\label{fig:few_shot}
\Description{A line chart compares robustness and harmonic mean scores across one, four, and sixteen-shot training scales. The horizontal axis uses a logarithmic scale for the amount of training data per class. The vertical axis measures percentage scores from 10 to 30. The data form two distinct upward-sloping bands. The upper red band represents the harmonic mean, while the lower blue band represents robustness. Within both regions, the proposed multi-modal ADAPT method occupies the top position and achieves the highest scores at all data scales. It is closely followed by the single-modal ADAPT method, FAP, and APT. Two isolated star markers on the far left vertical axis indicate the zero-shot TeCoA baseline performance. These markers are positioned significantly lower than the starting points of the proposed methods at one shot.}
\end{figure}

\noindent \textbf{Adversarial few-shot classification.}
In this scenario, we assess the capability of prompts to learn robust representations from limited labeled data.
Specifically, models are tuned using $ \{1, 4, 16\}$ shots per class and evaluated on the remaining samples.
The average performance on 11 datasets is reported in Figure \ref{fig:few_shot}, where `H' denotes the harmonic mean of the robustness and accuracy. 
ADAPT consistently outperforms TeCoA and APT across all settings.
Even in the extreme data-scarce regime (i.e., 1-shot), ADAPT achieves significant improvements, surpassing APT (TeCoA) by 0.90\% (1.9\%) and 1.64\% (2.7\%) in terms of the robustness and `H', respectively.
Notably, ADAPT$_{\text M}$ achieves the highest robustness and `H' among all baselines under all settings, validating that our proposed method effectively learns robust features without overfitting to shortcuts.\footnote{***In Figure 4, there is a method called D-ADAPT. revise the legend. \edit{***revised.}}\footnote{***why are the performance of TeCoA two points instead of two curves in Figure 4?\edit{***Because TeCoA is a zero-shot method.}}

\noindent \textbf{Adversarial robustness evaluation under various attacks.}
Here we evaluate the adversarial robustness of \edit{adversarial} prompt tuning\footnote{***adversarial prompt tuning? \edit{***revised}} methods using a wider variety of attacks, including Carlini \& Wagner (CW) attack \citep{cw} and AutoAttack (AA) \citep{autoattack}. CW represents a strong optimization-based attack aiming for minimal perturbations, while AA serves as a standardized, ensemble-based attack for reliable robustness assessment. 
Both CW and AA generate more potent adversarial examples than PGD, making them stronger attack methods for more rigorous evaluation. 
Table \ref{tab:aa} reports the performance across 10 datasets (excluding ImageNet)\footnote{***why excluding this dataset? any reason? or any reference to follow? \edit{***Evaluating on Imagenet on AA may take more than one month.}}, where `$\mathrm{H}_\mathrm{a}$' denotes the harmonic mean computed over $\mathrm{Acc.}_\mathrm{Base}$, $\mathrm{PGD}_\mathrm{Base}$, $\mathrm{Acc.}_\mathrm{New}$, $\mathrm{PGD}_\mathrm{New}$, CW, and AA on both base and new classes.
The proposed methods consistently outperform the baselines under all attacks.
Specifically, ADAPT surpasses APT by 5.23\% in `$\mathrm{H}_\mathrm{a}$'.
Furthermore, ADAPT$_{\text M}$ achieves the best overall performance, outperforming the previous SOTA method (i.e., FAP) by 1.73\% in terms of `$\mathrm{H}_\mathrm{a}$'.\footnote{***what is H$_a$? \edit{***revised.}}
This consistent superiority against such a diverse and aggressive attack suite provides strong evidence that the robustness learned by ADAPT is generalized, stemming from the effective disentanglement of invariant semantic features from attack-specific shortcuts.

\begin{table}[!t]
  \centering
  \caption{Average performance under the adversarial base-to-new generalization setting with various attacks. $\epsilon=4/255$.}
  \resizebox{0.9\linewidth}{!}{
  \setlength{\tabcolsep}{1mm}{
    \begin{tabular}{l|cc|ccccc}
    \toprule
    \multirow{3}[6]{*}{Method} & \multicolumn{7}{c}{\textbf{Average over 10 datasets}} \\
\cmidrule{2-8}          & \multicolumn{2}{c|}{\textbf{Base}} & \multicolumn{4}{c|}{\textbf{New}} & \multirow{2}[4]{*}{$\mathrm{H}_\mathrm{a}$} \\
\cmidrule{2-7}          & Acc.  & PGD   & Acc.  & PGD   & CW    & \multicolumn{1}{c|}{AA} &  \\
    \midrule
    TeCoA & 40.54  & 13.59  & \textbf{38.08} & 14.90  & 14.59  & 13.18  & 17.86  \\
    \midrule
    APT   & 54.57  & 26.15  & 26.63  & 9.83  & 9.45  & 8.56  & 14.33  \\
    \rowcolor{Gray} ADAPT & \textbf{55.76} & 26.26  & 35.03  & 14.41  & 13.64  & 12.59  & 19.56  \\
    \midrule
    FAP   & 54.07  & 26.70  & 36.36  & 14.98  & 13.32  & 12.19  & 19.52  \\
    \rowcolor{Gray} ADAPT$_{\text M}$ & 53.03  & \textbf{27.26} & 37.80  & \textbf{16.45} & \textbf{14.86} & \textbf{13.79} & \textbf{21.23} \\
    \bottomrule
    \end{tabular}}}
  \label{tab:aa}%
\end{table}%

\begin{table}[!t]
  \centering
  \caption{Ablation study of the proposed ADAPT framework. We report the average performance under the adversarial base-to-new generalization setting across 11 datasets. $\epsilon=4/255$.}
  \resizebox{0.9\linewidth}{!}{
  \setlength{\tabcolsep}{2.5mm}{
    \begin{tabular}{l|cc|cc|c}
    \toprule
    \multirow{2}[4]{*}{Methods} & \multicolumn{2}{c|}{Base} & \multicolumn{2}{c|}{New} & \multirow{2}[4]{*}{{$\mathrm{H}_\mathrm{b}$}} \\
\cmidrule{2-5}          & Acc   & Rob   & Acc   & Rob   &  \\
    \midrule
    TeCoA & 44.51  & 14.62  & 42.16  & 16.14  & 22.66  \\
    \midrule
    APT   & 58.62  & 27.40  & 30.13  & 10.96  & 22.47  \\
    \midrule
    \rowcolor{Gray} ADAPT & 60.04  & 27.48  & 38.98  & 15.65  & \textbf{28.05} \\
      w/o $\mathcal{P}_\mathrm{d}$ & 59.95  & 27.55  & 36.43  & 14.75  & 26.99  \\
      w/o $\mathcal{L}_\mathrm{dis}$ & 59.80  & 27.49  & 38.25  & 14.99  & 27.41  \\
      w/o $\mathcal{L}_{\mathrm{sem}}$ & 58.34  & 27.37  & 34.14  & 12.54  & 24.58  \\
      w/o $\mathcal{L}_{\mathrm{orth}}$ & 60.19  & 27.56  & 37.52  & 13.16  & 25.72  \\
    \bottomrule
    \end{tabular}}}
  \label{tab:ablation}%
\end{table}%

\emph{Due to page limit, more experimental results can be found in Appendix \ref{appendix:Additional_Experimental_Results}, including generalization to alternative VLM backbones, computational cost analysis, and the results under the settings of adversarial cross-dataset generalization and adversarial domain generalization.}

\begin{figure*}[t]
  \centering
  \begin{subfigure}{0.22\linewidth}
    \includegraphics[width=\linewidth]{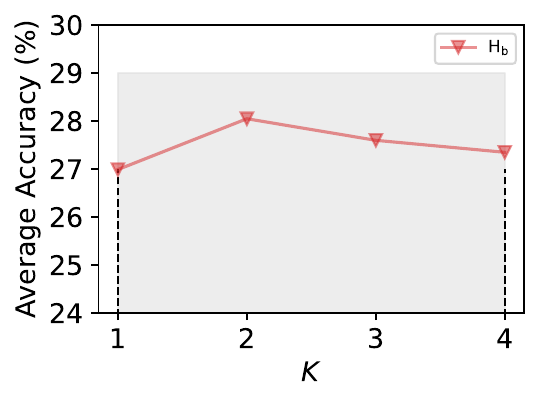}
    \caption{}
    \label{K}
  \end{subfigure}
  \hfill
  \begin{subfigure}{0.22\linewidth}
    \includegraphics[width=\linewidth]{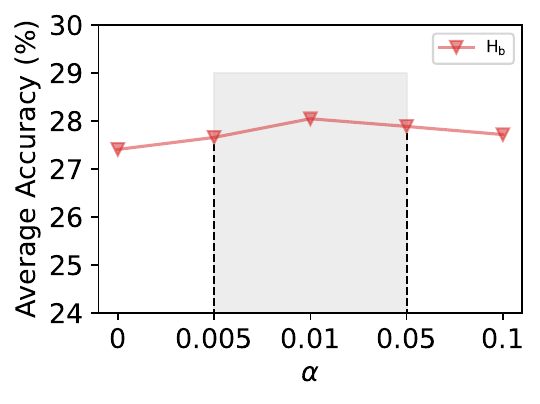}
    \caption{}
    \label{alpha}
  \end{subfigure}
  \hfill
  \begin{subfigure}{0.22\linewidth}
    \includegraphics[width=\linewidth]{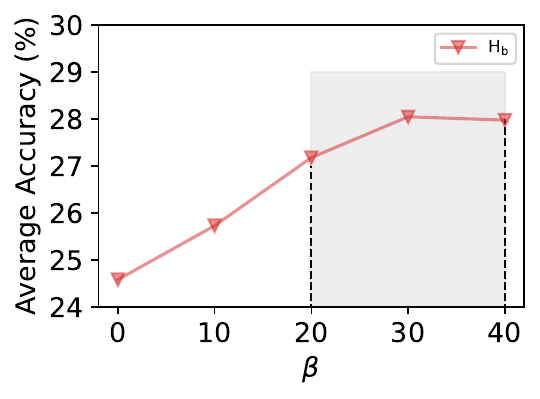}
    \caption{}
    \label{beta}
  \end{subfigure}
  \hfill
  \begin{subfigure}{0.22\linewidth}
    \includegraphics[width=\linewidth]{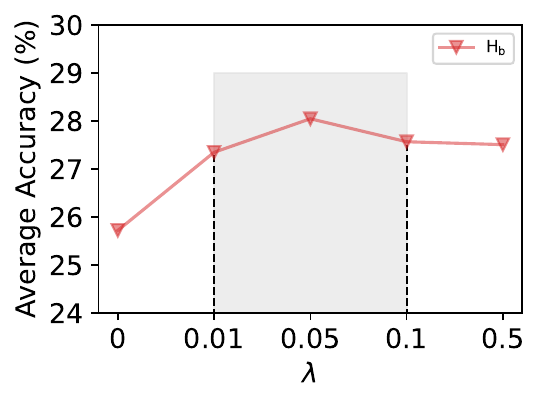}
    \caption{}
    \label{lambda}
  \end{subfigure}
  \caption{The average performance of ADAPT on all datasets w.r.t. hyperparameters (i.e., $K$, $\alpha$, $\beta$, and $\lambda$) under the adversarial base-to-new generalization setting.}
  \label{fig_parameter}
  \Description{A series of four line charts analyzes the impact of adjusting four hyperparameters on average accuracy. Chart A plots parameter K on a horizontal axis from 1 to 4 and shows accuracy peaking at K equals 2. Chart B plots parameter alpha from 0 to 0.1 with a gentle peak around 0.01. Chart C plots parameter beta from 0 to 40, where accuracy rises sharply from 0 to 30 before leveling off. Chart D plots parameter lambda from 0 to 0.5 and reaches a distinct peak at 0.05. Each subchart includes a light gray rectangular shaded background that visually highlights the parameter range producing the highest and most stable accuracy results.}
\end{figure*}

\subsection{Ablation Studies}
\label{ablation}
In this section, we conduct ablations to analyze the contributions of the pool of decoy prompts $\mathcal{P}_\mathrm{d}$, the dissimilarity loss $\mathcal{L}_\mathrm{dis}$, the semantic loss $\mathcal{L}_{\mathrm{sem}}$, and the orthogonal loss $\mathcal{L}_{\mathrm{orth}}$.

As shown in Table \ref{tab:ablation}, the ADAPT method achieves good performance with an average `$\mathrm{H}_\mathrm{b}$' of 28.05$\%$, verifying the necessity of integrating these components.
We observe that removing $\mathcal{L}_{\mathrm{sem}}$ results in the most severe performance degradation. This drop is expected, as $\mathcal{L}_{\mathrm{sem}}$ serves as a semantic regularizer that tethers the target prompt to the general semantic space. Without it, the model may suffer from catastrophic forgetting of its pre-trained knowledge.
Furthermore, the exclusion of the orthogonal loss $\mathcal{L}_{\mathrm{orth}}$ leads to a drop of 2.33\% in `$\mathrm{H}_\mathrm{b}$'.
While $\mathcal{L}_{\mathrm{sem}}$ ensures the model remains a competent VLM, $\mathcal{L}_{\mathrm{orth}}$ is responsible for explicitly enforcing independence between the target and decoy prompts to actively filter out pseudo-robust features.
Additionally, removing the dissimilarity loss $\mathcal{L}_\mathrm{dis}$ or the pool of decoy prompts $\mathcal{P}_\mathrm{d}$ also impairs the generalization capability on new classes, confirming that a diverse set of decoy prompts explicitly pushed away from valid semantics is a requisite to entrap diverse pseudo-robust features effectively.

\section{Analysis on Hyperparameter Sensitivity}
\label{appendix:ablation}

We analyze the sensitivity of $K$, $\alpha$, $\beta$, and $\lambda$ using the average performance over 11 datasets under adversarial base-to-new generalization with $\epsilon=4/255$.

\noindent \textbf{Effect of $K$.}
As shown in Figure~\ref{K}, increasing $K$ from 1 to 2 improves performance, indicating that multiple decoy prompts better capture diverse pseudo-robust features. Further increasing $K$ provides no additional benefit and slightly complicates optimization.

\noindent \textbf{Effect of $\alpha$.}
The coefficient $\alpha$ controls the dissimilarity loss $\mathcal{L}_\mathrm{dis}$ in Eq.~\eqref{eq_decoy_total}. Figure~\ref{alpha} shows that ADAPT is relatively insensitive to $\alpha$ within $[0.005,0.1]$, with the best performance at $\alpha=0.01$.

\noindent \textbf{Effect of $\beta$.}
The coefficient $\beta$ weights the semantic loss $\mathcal{L}_\mathrm{sem}$ in Eq.~\eqref{eq_target_total}. As shown in Figure~\ref{beta}, small values of $\beta$ lead to substantial degradation, while the performance stabilizes for $\beta\in[30,40]$, highlighting the importance of preserving general semantic knowledge.

\noindent \textbf{Effect of $\lambda$.}
The coefficient $\lambda$ controls the orthogonal loss $\mathcal{L}_\mathrm{orth}$ in Eq.~\eqref{eq_target_total}. As shown in Figure~\ref{lambda}, the performance improves substantially as $\lambda$ increases from 0 to 0.05, confirming the benefit of explicitly disentangling the target and decoy prompts. The best performance is achieved at $\lambda=0.05$, while larger values lead to only a slight decline and remain relatively stable, indicating that ADAPT is not overly sensitive to $\lambda$ within a moderate range.

\begin{figure}[!htbp]
  \centering
  \begin{subfigure}{0.46\linewidth}
    \includegraphics[width=\linewidth]{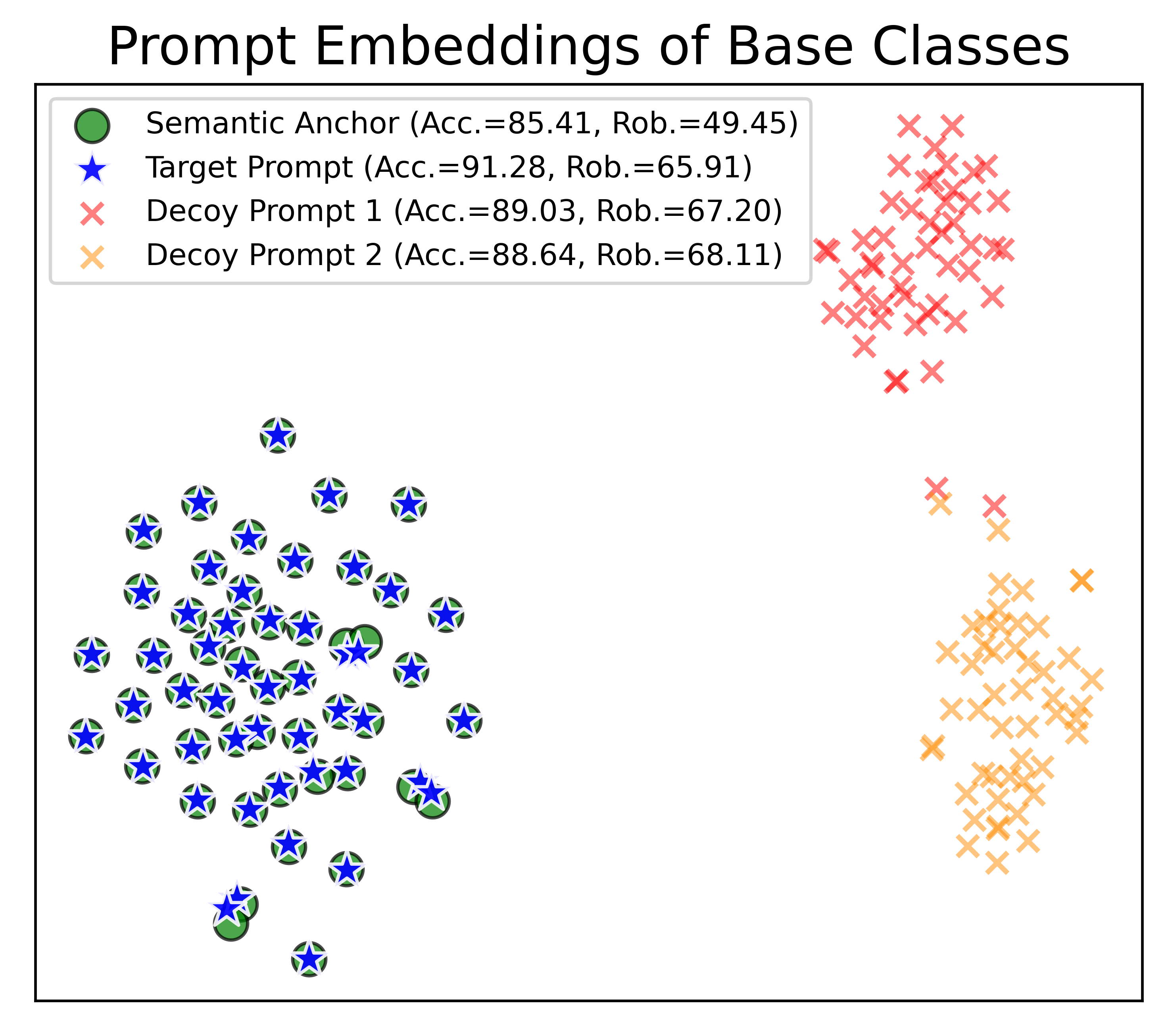}
  \end{subfigure}
\hfill
  \begin{subfigure}{0.46\linewidth}
    \includegraphics[width=\linewidth]{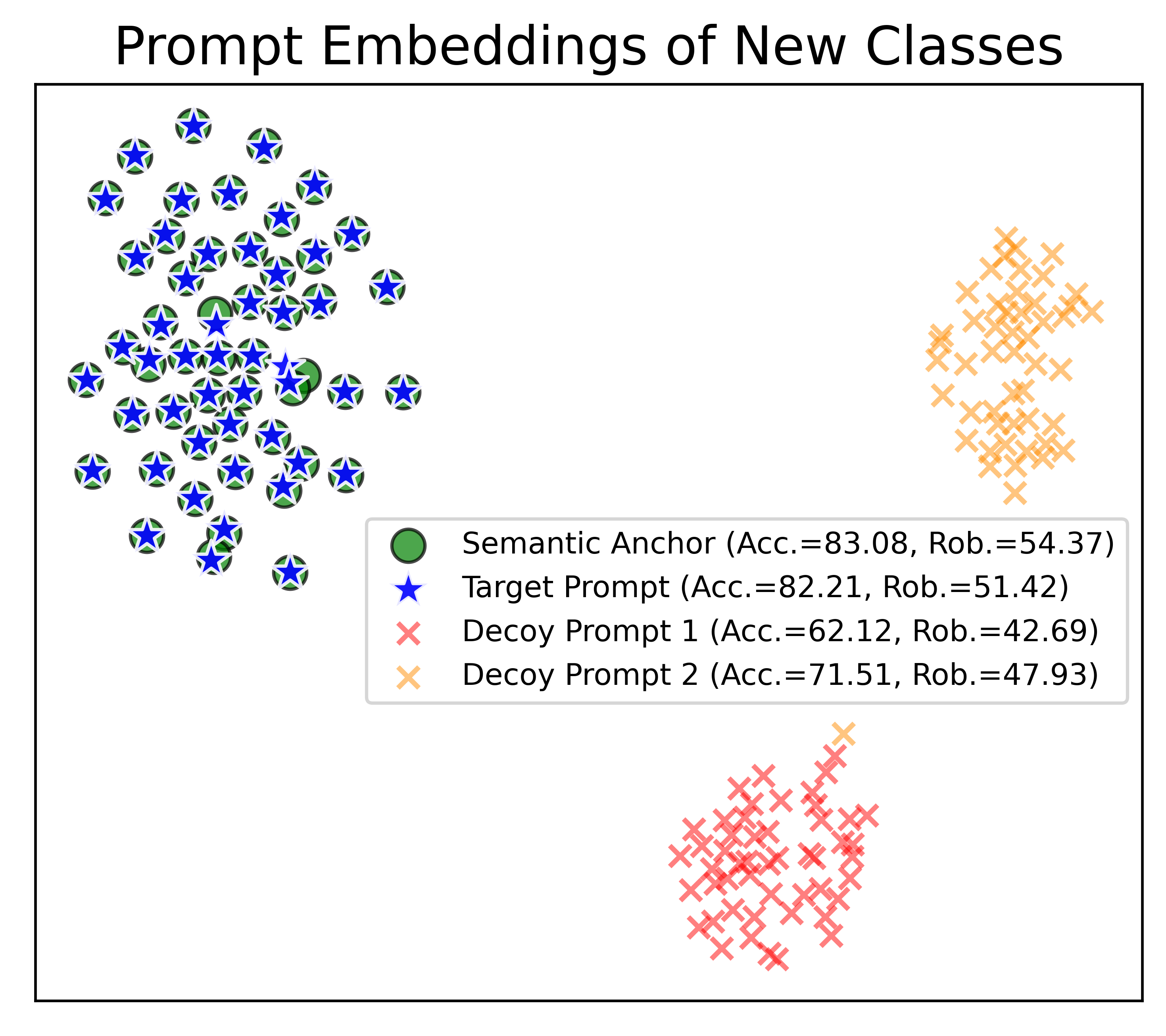}
  \end{subfigure}
    \caption{t-SNE visualization of prompt embeddings on Caltech101. The target prompts cluster around general semantic anchors, while the decoy prompts lie in a separate subspace.}
    \label{fig:tsne}
    \Description{Two side-by-side scatter plots display the 2D spatial clustering of prompt embeddings for base and new classes. The left chart for base classes and the right chart for new classes exhibit identical spatial layout patterns. On the left side of each chart, the target prompt markers cluster tightly and overlap directly with markers representing general semantic anchors. On the far right side of each chart, two distinct sets of decoy prompts form their own independent clusters. These decoy clusters are spatially isolated from the semantic anchors on the left. The two decoy clusters also maintain a large physical distance from each other, indicating they have converged into separate sub-spaces without interference.}
\end{figure}

\section{Visualization of Prompt Embeddings}
To verify the effectiveness of the proposed dual-prompt mechanism, we visualize the prompt embeddings using t-SNE in Figure~\ref{fig:tsne}. The results reveal a clear disentanglement in the latent space. Target prompts (in blue) consistently align with general semantic anchors (in green) across both base and new classes, confirming that our semantic loss effectively enforces the learning of semantically invariant features. Crucially, the decoy prompts (in red) diverge into distinct non-semantic subspaces isolated from the anchors and, notably, separated from each other. Despite this deviation, both groups of decoy prompts achieve good performance on base classes but degrade significantly on new classes.
This contrast indicates that different decoy prompts converge to diverse class-specific shortcuts. Though these features are discriminative on the training classes, they lack transferability. By employing a pool of decoy prompts to act as ``diverse traps'' for these varying shortcuts, ADAPT allows the target prompt to learn purified robust features.

\section{Conclusion}
In this paper, we identify and formally define the phenomenon of ``robust generalization overfitting'' in adversarial prompt tuning, revealing that the failure of robustness generalization stems from the model's reliance on pseudo-robust features.
To address this, we propose \textbf{ADAPT}, which is guided by the philosophy of ``Learning What Not to Learn''. Specifically, ADAPT employs a dual-prompt mechanism with explicit orthogonal loss, successfully entrapping non-generalizable pseudo-robust features into decoy prompts while guiding the target prompt to learn robust features.
Experiments on benchmark datasets demonstrate that ADAPT achieves state-of-the-art performance, particularly in the challenging base-to-new generalization settings.

\begin{acks}
This work was supported by National Natural Science Foundation of China under Grant no. 62136005, Shenzhen fundamental research program JCYJ20250604144724032, and the China Postdoctoral Science Foundation under Grant No. 2026M791631.
\end{acks}

\bibliographystyle{ACM-Reference-Format}
\balance
\bibliography{references}


\clearpage
\appendix
\renewcommand{\thetable}{A\arabic{table}}
\renewcommand{\thefigure}{A\arabic{figure}}
\setcounter{table}{0}
\setcounter{figure}{0}

\noindent \textbf{\large{Contents of the Appendix}}
\begin{enumerate}
    \item Appendix \ref{app:theory} — Theoretical Analysis
    \item Appendix \ref{appendix:Implementation_Details} — Setting Details
    \item Appendix \ref{appendix:Additional_Experimental_Results} — Additional Experimental Results
        \begin{itemize}
            \item \ref{appendix:robust_backbones} - Generalization to Alternative VLM Backbones
            \item \ref{appendix:b2n} — Comprehensive Results of Base-to-new Generalization
            \item \ref{appendix:fewshot} — Full Results of Adversarial Few-shot Classification
            \item \ref{appendix:dg} — Results of Adversarial Domain Generalization
            \item \ref{appendix:xd} — Results of Adversarial Cross-dataset Generalization
            \item \ref{appendix:cost} — Computational Cost Analysis
            \item \ref{app:projection_analysis} — Empirical Validation of the Semantic--Shortcut Decomposition
            \item \ref{app:rgo_settings} — Robust Generalization Overfitting under Different Training Settings
            \item \ref{app:dgaa} — Decoy-Guided Adaptive Attack
            
        \end{itemize}
\end{enumerate}

\section{Theoretical Analysis}\label{app:theory}

\subsection{Assumptions and Main Results (restated)}

\paragraph{Notation.}

All CLIP image/text embeddings are $\ell_2$-normalized unless stated otherwise.

\begin{assumption}
\label{ass:shortcut_shift}
${z}(\mathbf{x})\in\mathbb{R}^d$ is the CLIP image feature of a (possibly adversarial)
input $\mathbf{x}$, $\|{z}(\mathbf{x})\|_2=1$.
$\{{\mathbf{t}}_{\mathrm{H}}^{c}\}_{c=1}^{C}$ is the hand-crafted text embeddings (semantic anchors), $\|{\mathbf{t}}_{\mathrm{H}}^{c}\|_2=1$ for all $c$.

Assume there exists an orthogonal decomposition of the feature space
$\mathbb{R}^{d}=\mathcal{S}\oplus\mathcal{U}$,
where $\mathcal{S}$ is a semantic subspace and $\mathcal{U}$ is a pseudo-robust subspace, such that:
\begin{enumerate}
\item \textbf{(Feature decomposition)} For any (possibly adversarially perturbed) input $\mathbf{x}$ from either base or new classes,
\begin{equation}
{z}(\mathbf{x})=s(\mathbf{x})+u(\mathbf{x}),\quad
s(\mathbf{x})\in\mathcal{S},\ u(\mathbf{x})\in\mathcal{U},\quad
\|u(\mathbf{x})\|_2\le B.
\end{equation}
The distribution of $u(\mathbf{x})$ may change arbitrarily from base to new classes.

\item \textbf{(Anchor semantics)} ${\mathbf{t}}_{\mathrm{H}}^{c}\in\mathcal{S}$ for all $c\in[C]$.

\item \textbf{(New-class anchor margin)} There exist $\gamma>0$ and $\zeta\in[0,1]$ such that with probability at least $1-\zeta$
over a new-class sample $(\mathbf{x},y)$,
\begin{equation}
m_{\mathrm{H}}(\mathbf{x},y)
:=\min_{c\ne y}\ \langle s(\mathbf{x}),\,{\mathbf{t}}_{\mathrm{H}}^{y}-{\mathbf{t}}_{\mathrm{H}}^{c}\rangle
\ \ge\ \gamma.
\end{equation}
where $< \cdot ,\cdot >$ denotes the inner product operation.
\end{enumerate}
\end{assumption}


\begin{theorem}[Restatement of Theorem~3.2]\label{thm:adapt_main_restate}
$\{{\mathbf{t}}_{\mathrm{t}}^{c}\}_{c=1}^{C}$ is the text embeddings of the target prompts, and define
the target classifier
\begin{equation}
\hat{y}(\mathbf{x}) := \arg\max_{c\in C} \ \langle {z}(\mathbf{x}), {\mathbf{t}}_{\mathrm{t}}^{c}\rangle.
\end{equation}
Define
\begin{equation}
\varepsilon_{\mathrm{sem}}
:=\max_{c\in C}\|{\mathbf{t}}_{\mathrm{t}}^{c}-{\mathbf{t}}_{\mathrm{H}}^{c}\|_2,
\qquad
\varepsilon_{\mathcal{U}}
:=\max_{c\in C}\|P_{\mathcal{U}}{\mathbf{t}}_{\mathrm{t}}^{c}\|_2,
\end{equation}
where $P_{\mathcal{U}}(\cdot)$ is the orthogonal projector onto $\mathcal{U}$.
Under Assumption~\ref{ass:shortcut_shift}, if
\begin{equation}
\gamma > 2\varepsilon_{\mathrm{sem}} + 2B\,\varepsilon_{\mathcal{U}},
\end{equation}
then for any new-class sample $(\mathbf{x},y)$ satisfying $m_{\mathrm{H}}(\mathbf{x},y)\ge\gamma$,
the classifier predicts correctly, i.e., $\hat{y}(\mathbf{x})=y$,
regardless of the shortcut component $u(\mathbf{x})$ as long as $\|u(\mathbf{x})\|_2\le B$.
Consequently, the testing error rate of $\hat{y}$ on new classes is at most $\zeta$.
\end{theorem}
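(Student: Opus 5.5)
We fix a new-class sample $(\mathbf{x},y)$ with $m_{\mathrm{H}}(\mathbf{x},y)\ge\gamma$ and an arbitrary competing class $c\ne y$. We will show that the target margin $\langle z(\mathbf{x}),\mathbf{t}_{\mathrm{t}}^{y}-\mathbf{t}_{\mathrm{t}}^{c}\rangle$ is strictly positive for every admissible shortcut component $u(\mathbf{x})$ with $\|u(\mathbf{x})\|_2\le B$. Since this holds for all $c\ne y$, we get $\hat{y}(\mathbf{x})=y$. The error statement then follows at once: the bad event $\{\hat y(\mathbf{x})\neq y\}$ is contained in $\{m_{\mathrm{H}}(\mathbf{x},y)<\gamma\}$, whose probability is at most $\zeta$ by Assumption~\ref{ass:shortcut_shift}(3).

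\textbf{Step 1 (split the margin).} Using $z=s+u$ and the orthogonality $\mathcal{S}\perp\mathcal{U}$, write
\[
\langle z,\mathbf{t}_{\mathrm{t}}^{y}-\mathbf{t}_{\mathrm{t}}^{c}\rangle
=\langle s,\mathbf{t}_{\mathrm{H}}^{y}-\mathbf{t}_{\mathrm{H}}^{c}\rangle
+\langle s,(\mathbf{t}_{\mathrm{t}}^{y}-\mathbf{t}_{\mathrm{H}}^{y})-(\mathbf{t}_{\mathrm{t}}^{c}-\mathbf{t}_{\mathrm{H}}^{c})\rangle
+\langle u,P_{\mathcal{U}}\mathbf{t}_{\mathrm{t}}^{y}-P_{\mathcal{U}}\mathbf{t}_{\mathrm{t}}^{c}\rangle .
\]
The last term uses $\langle u,v\rangle=\langle u,P_{\mathcal{U}}v\rangle$ for $u\in\mathcal{U}$. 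We keep the semantic part of $z$ and the shortcut part of $z$ separate rather than bounding them jointly through $\|z\|_2=1$.

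\textbf{Step 2 (bound each term).}
\begin{itemize}
\item The first term is at least $\gamma$ by the margin assumption.
\item For the second term, note $\|s\|_2\le\|z\|_2=1$, since $s$ is an orthogonal projection of the unit vector $z$. Cauchy--Schwarz and the triangle inequality then bound it below by $-2\varepsilon_{\mathrm{sem}}$.
\item For the third term, Cauchy--Schwarz with $\|u\|_2\le B$ gives a lower bound of $-B(\|P_{\mathcal{U}}\mathbf{t}_{\mathrm{t}}^{y}\|_2+\|P_{\mathcal{U}}\mathbf{t}_{\mathrm{t}}^{c}\|_2)\ge-2B\varepsilon_{\mathcal{U}}$.
\end{itemize}
Summing, the margin is at least $\gamma-2\varepsilon_{\mathrm{sem}}-2B\varepsilon_{\mathcal{U}}>0$ under the hypothesis. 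The bound is uniform in $u$, which gives the claimed invariance to arbitrary shortcut shifts.

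\textbf{Main subtlety.} The potential obstacle is double-counting the shortcut leakage. The naive route applies $\varepsilon_{\mathrm{sem}}$ to the full vector $z$ and then adds the shortcut term separately. This would either need $\|z\|$ in place of $\|s\|$ or produce a worse constant. The decomposition in Step 1 avoids this because $\mathbf{t}_{\mathrm{H}}^{c}\in\mathcal{S}$ gives $\langle u,\mathbf{t}_{\mathrm{H}}^{c}\rangle=0$. As a result, the semantic deviation only ever meets $s$, and only the $\mathcal{U}$-projection of $\mathbf{t}_{\mathrm{t}}$ meets $u$.

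We should double-check that the second term is correctly paired with $s$ alone. Expanding $\langle z,\mathbf{t}_{\mathrm{t}}^{y}\rangle=\langle s,\mathbf{t}_{\mathrm{t}}^{y}\rangle+\langle u,\mathbf{t}_{\mathrm{t}}^{y}\rangle$ and $\langle s,\mathbf{t}_{\mathrm{t}}^{y}\rangle=\langle s,\mathbf{t}_{\mathrm{H}}^{y}\rangle+\langle s,\mathbf{t}_{\mathrm{t}}^{y}-\mathbf{t}_{\mathrm{H}}^{y}\rangle$ confirms that the identity in Step 1 is exact.
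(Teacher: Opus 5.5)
Your proof is correct and is essentially the paper's own argument. You split $\langle z(\mathbf{x}),\mathbf{t}_{\mathrm{t}}^{y}-\mathbf{t}_{\mathrm{t}}^{c}\rangle$ into a semantic term bounded below by $\gamma-2\varepsilon_{\mathrm{sem}}$ (writing each $\mathbf{t}_{\mathrm{t}}^{k}$ as its anchor plus a deviation of norm at most $\varepsilon_{\mathrm{sem}}$, and using $\|s(\mathbf{x})\|_2\le 1$) and a shortcut term bounded below by $-2B\varepsilon_{\mathcal{U}}$ (via $P_{\mathcal{U}}$ and Cauchy--Schwarz), then use the same event-inclusion step to get error at most $\zeta$; one small correction is that your Step~1 identity never uses $\mathbf{t}_{\mathrm{H}}^{c}\in\mathcal{S}$, since you split $z=s+u$ first and $m_{\mathrm{H}}$ is already defined on $s$, so the ``main subtlety'' paragraph credits the wrong fact, though this does not affect the argument.
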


\subsection{Proof of Theorem~\ref{thm:adapt_main_restate}}

\begin{proof}
Fix a new-class sample $(\mathbf{x},y)$ such that $m_{\mathrm{H}}(\mathbf{x},y)\ge \gamma$.
For any competing class $c\neq y$, consider the score difference
\begin{equation}
\Delta_{y,c}(\mathbf{x})
:=
\langle {z}(\mathbf{x}), {\mathbf{t}}_{\mathrm{t}}^{y}-{\mathbf{t}}_{\mathrm{t}}^{c}\rangle.
\end{equation}
If $\Delta_{y,c}(\mathbf{x})>0$ holds for all $c\neq y$, then
$\langle {z}(\mathbf{x}), {\mathbf{t}}_{\mathrm{t}}^{y}\rangle >
\langle {z}(\mathbf{x}), {\mathbf{t}}_{\mathrm{t}}^{c}\rangle$
for all $c\neq y$, implying $\hat{y}(\mathbf{x})=y$.

\paragraph{Step 1: Decompose the score difference into semantic and shortcut parts.}
By Assumption~\ref{ass:shortcut_shift}(1),
${z}(\mathbf{x})=s(\mathbf{x})+u(\mathbf{x})$ with
$s(\mathbf{x})\in\mathcal{S}$ and $u(\mathbf{x})\in\mathcal{U}$.
Thus,
\begin{equation}
\Delta_{y,c}(\mathbf{x})
=
\underbrace{\langle s(\mathbf{x}), {\mathbf{t}}_{\mathrm{t}}^{y}-{\mathbf{t}}_{\mathrm{t}}^{c}\rangle}_{\text{semantic term}}
+
\underbrace{\langle u(\mathbf{x}), {\mathbf{t}}_{\mathrm{t}}^{y}-{\mathbf{t}}_{\mathrm{t}}^{c}\rangle}_{\text{shortcut term}}.
\end{equation}

\paragraph{Step 2: Lower bound the semantic term using anchor margin and $\varepsilon_{\mathrm{sem}}$.}
Write ${\mathbf{t}}_{\mathrm{t}}^{k}={\mathbf{t}}_{\mathrm{H}}^{k}+\boldsymbol{\Delta}_{k}$ for $k\in\{y,c\}$.
By definition of $\varepsilon_{\mathrm{sem}}$, we have $\|\boldsymbol{\Delta}_{k}\|_2\le \varepsilon_{\mathrm{sem}}$.
Then
\begin{align}
\langle s(\mathbf{x}), {\mathbf{t}}_{\mathrm{t}}^{y}-{\mathbf{t}}_{\mathrm{t}}^{c}\rangle
&=
\langle s(\mathbf{x}), {\mathbf{t}}_{\mathrm{H}}^{y}-{\mathbf{t}}_{\mathrm{H}}^{c}\rangle
+
\langle s(\mathbf{x}), \boldsymbol{\Delta}_{y}-\boldsymbol{\Delta}_{c}\rangle \\
&\ge
\gamma
-
\|s(\mathbf{x})\|_2\big(\|\boldsymbol{\Delta}_{y}\|_2+\|\boldsymbol{\Delta}_{c}\|_2\big),
\end{align}
where the inequality uses $m_{\mathrm{H}}(\mathbf{x},y)\ge \gamma$.
Moreover, since $\|{z}(\mathbf{x})\|_2=1$ and $s(\mathbf{x})$ is an orthogonal component,
$\|s(\mathbf{x})\|_2\le 1$.
Therefore,
\begin{equation}
\langle s(\mathbf{x}), {\mathbf{t}}_{\mathrm{t}}^{y}-{\mathbf{t}}_{\mathrm{t}}^{c}\rangle
\ge
\gamma - 2\varepsilon_{\mathrm{sem}}.
\end{equation}

\paragraph{Step 3: Lower bound the shortcut term using $\varepsilon_{\mathcal{U}}$.}
Since $u(\mathbf{x})\in\mathcal{U}$,
\begin{equation}
\langle u(\mathbf{x}), {\mathbf{t}}_{\mathrm{t}}^{y}-{\mathbf{t}}_{\mathrm{t}}^{c}\rangle
=
\langle u(\mathbf{x}), P_{\mathcal{U}}({\mathbf{t}}_{\mathrm{t}}^{y}-{\mathbf{t}}_{\mathrm{t}}^{c})\rangle.
\end{equation}
By Cauchy--Schwarz and Assumption~\ref{ass:shortcut_shift}(1), 
\begin{align}
\langle u(\mathbf{x}), {\mathbf{t}}_{\mathrm{t}}^{y}-{\mathbf{t}}_{\mathrm{t}}^{c}\rangle
&\ge
-\|u(\mathbf{x})\|_2\cdot
\|P_{\mathcal{U}}{\mathbf{t}}_{\mathrm{t}}^{y}-P_{\mathcal{U}}{\mathbf{t}}_{\mathrm{t}}^{c}\|_2 \\
&\ge
-B\big(\|P_{\mathcal{U}}{\mathbf{t}}_{\mathrm{t}}^{y}\|_2+\|P_{\mathcal{U}}{\mathbf{t}}_{\mathrm{t}}^{c}\|_2\big)
\ \ge\ -2B\,\varepsilon_{\mathcal{U}},
\end{align}
where the last inequality uses the definition of $\varepsilon_{\mathcal{U}}$.

\paragraph{Step 4: Combine the bounds.}
Combining Steps 2--3 yields, for any $c\neq y$,
\begin{equation}
\Delta_{y,c}(\mathbf{x})
\ge
\gamma - 2\varepsilon_{\mathrm{sem}} - 2B\,\varepsilon_{\mathcal{U}}.
\end{equation}
Under the condition $\gamma > 2\varepsilon_{\mathrm{sem}} + 2B\,\varepsilon_{\mathcal{U}}$,
we have $\Delta_{y,c}(\mathbf{x})>0$ for all $c\neq y$, hence $\hat{y}(\mathbf{x})=y$.

\paragraph{Step 5: Error bound on new classes.}
By Assumption~\ref{ass:shortcut_shift}(3), the event $\{m_{\mathrm{H}}(\mathbf{x},y)\ge\gamma\}$ holds with probability at least $1-\zeta$.
Since we just showed the classifier is correct whenever this event holds,
\begin{equation}
\Pr_{(\mathbf{x},y)\sim \mathcal{D}_{\mathrm{new}}}\big[\hat{y}(\mathbf{x})\neq y\big]
\le
\Pr\big[m_{\mathrm{H}}(\mathbf{x},y)<\gamma\big]
\le \zeta.
\end{equation}
This also implies invariance to any base-to-new shift that changes only $u(\mathbf{x})$ (within $\|u(\mathbf{x})\|_2\le B$)
while keeping $s(\mathbf{x})$ unchanged, because the above lower bound does not depend on the particular realization of $u(\mathbf{x})$
beyond its norm bound.
\end{proof}

\section{Setting Details}
\label{appendix:Implementation_Details}

\noindent \textbf{Datasets.}
Following the protocols established in \citep{coop,fap,apt}, we conduct experiments under four distinct settings: adversarial base-to-new generalization, adversarial few-shot classification, adversarial cross-dataset generalization, and adversarial domain generalization.
The first three settings are evaluated on a diverse suite of 11 image classification datasets, covering a wide spectrum of visual recognition tasks.
Specifically, these include ImageNet \citep{imagenet} and Caltech101 \citep{caltech101} for general object recognition; OxfordPets \citep{oxfordpets}, StanfordCars \citep{stanfordcars}, Flowers102 \citep{oxfordflowers}, Food101 \citep{food}, and FGVCAircraft \citep{FGVCAircraft} for fine-grained visual categorization; SUN397 \citep{sun397}, DTD \citep{dtd}, and EuroSAT \citep{eurosat} for scene, texture, and satellite imagery classification, respectively; UCF101 \citep{ucf101} for action recognition.
For the setting of adversarial domain generalization, we train on the ImageNet dataset and evaluate on its out-of-distribution (OOD) variants, including ImageNetV2 \citep{imagenet_v2}, ImageNet-Sketch \citep{imagenet_sketch}, ImageNet-A \citep{imagenet_a}, and ImageNet-R \citep{imagenet_r}.

\noindent \textbf{Implementation Details.}
Our implementation is built upon the codebases of CoOp \citep{coop} and APT \citep{apt}. All experiments are conducted on NVIDIA GeForce RTX 3090, except for the ImageNet dataset, which is on NVIDIA A100. To ensure a fair comparison with prior works, the backbone architecture follows the default robust configuration \citep{tecoa} to align with the adversarial prompt tuning benchmark \citep{apt}. All experiments are conducted using the ViT-B/32 CLIP model. We strictly adhere to the experimental settings specified in the original implementations of all baseline methods, including training epochs, learning rate schedules, and data augmentation strategies.
Training is conducted using SGD. The learning rate is decayed using the cosine annealing rule. The maximum epoch for ADAPT and APT is set to 200, 100, and 50 for 16, 4, and 1 shots, respectively. For ImageNet, they are 50, 20, and 20. As for ADAPT$_{\text M}$ and FAP, the maximum epoch is fixed to 10. A warm-up strategy is used by fixing the learning rate to $10^{-5}$ during the first epoch. 
For the APT and ADAPT methods, the length of the learnable prompt vectors is fixed to 16 tokens.
For the FAP and ADAPT$_{\text M}$ methods, we use a deep prompting strategy, where prompt vectors of length 2 are inserted into both the vision and text branches across the first 9 transformer blocks.
For the ADAPT method, $\alpha$, $\beta$, and $\lambda$ are set to 0.01, 30\footnote{***so large}, and 0.05, respectively.
The size of the pool of decoy prompts $K$ is set to 2 for ADAPT and 4 for ADAPT$_{\text M}$, respectively.
To encourage the decoy prompt to converge to shortcuts rapidly, we set a higher learning rate for the decoy prompt (i.e., 0.01), and the learning rate for the target prompt is set to 0.002 for ADAPT and 0.0035 for ADAPT$_{\text M}$.

For AutoAttack, we use the standard AutoAttack setting, which executes a suite of four attacks: APGD-CE (untargeted), APGD-T (targeted), FAB-T (targeted), and Square (black-box). All component attacks utilize the default setting of 1 restart.

The hand-crafted prompts for different datasets follow \citet{clip, coop} and are shown below:
\begin{quote}
\begin{small}\begin{verbatim}
ImageNet: "a photo of a [CLS]."
Caltech101: "a photo of a [CLS]." 
OxfordPets: "a photo of a [CLS], a type of pet." 
StanfordCars: "a photo of a [CLS]." 
OxfordFlowers: "a photo of a [CLS], a type of flower." 
Food101: "a photo of [CLS], a type of food." 
FGVCAircraft: "a photo of a [CLS], a type of aircraft." 
SUN397: "a photo of a [CLS]." 
DTD: "a photo of a [CLS], a type of texture." 
EuroSAT: "a centered satellite photo of [CLS]." 
UCF101: "a photo of a person doing [CLS]." 
\end{verbatim}\end{small}
\end{quote}
Note that [CLS] denotes the placeholder for the class name.

\section{Additional Experimental Results}
\label{appendix:Additional_Experimental_Results}

\subsection{Generalization to Alternative VLM Backbones}
\label{appendix:robust_backbones}
To further demonstrate that ADAPT is model-agnostic, we extend our experiments to three alternative vision-language models: a larger CLIP model (ViT-L/14), a more recent and powerful model pre-trained via the FARE method \citep{fare}, and OpenCLIP. \cref{table_vlm} shows the average adversarial base-to-new generalization performance across 10 datasets (excluding ImageNet) under a perturbation budget of $\epsilon=4/255$.

First, to assess the scalability, we apply our method to the larger TeCoA model with a ViT-L/14 backbone. As shown in \cref{tab:vitl}, APT continues to suffer from robust generalization overfitting and yields a new-class robustness of only 32.47\%. In contrast, ADAPT successfully mitigates this degradation by improving the new-class robustness to 38.01\% and elevating the overall harmonic mean `$\mathrm{H}_\mathrm{b}$' from 48.18\% to 52.87\%. Furthermore, our multi-modal extension ADAPT$_{\text M}$ achieves an `$\mathrm{H}_\mathrm{b}$' of 53.94\%, consistently outperforming the competitive FAP baseline, which scores 52.38\%.

We observe a similar performance trajectory when adapting these methods to the FARE backbone. As shown in \cref{tab:vit32_fare}, the baseline APT method experiences a severe drop in new-class robustness to 13.49\%, resulting in a suboptimal `$\mathrm{H}_\mathrm{b}$' of 28.25\%. ADAPT effectively addresses this vulnerability by increasing the new-class robustness to 20.31\% and achieving an `$\mathrm{H}_\mathrm{b}$' of 35.54\%. Within the multi-modal prompting paradigm, ADAPT$_{\text M}$ attains the highest `$\mathrm{H}_\mathrm{b}$' of 39.24\% and surpasses the strongest baseline (FAP at 36.65\%). 

We further evaluate ADAPT using OpenCLIP ViT-B/32 pretrained on LAION2B.
Table~\ref{tab:openclip} reports the average performance over 10 datasets
under $\epsilon=4/255$. ADAPT consistently improves over APT, increasing
new-class accuracy and robustness from 39.11\% / 13.69\% to
47.13\% / 19.09\%, respectively. These results show that the robust
generalization gains of ADAPT transfer to CLIP-like models trained with
different data and pretraining pipelines.

These consistent improvements across different VLMs confirm that standard adversarial prompt tuning inherently struggles with pseudo-robust features. More importantly, those results demonstrate that ADAPT is a broadly applicable framework and can successfully enhance the robustness of different VLMs.

\begin{table*}[!t]
\renewcommand{\arraystretch}{1}
\centering
\caption{Performance under the setting of adversarial base-to-new generalization on different VLM backbones. `hc' denotes hand-crafted prompt, `tp' denotes textual prompt tuning, and `mmp' denotes multi-modal prompt tuning.}
\captionsetup[sub]{skip=1pt}
\subfloat[ \label{tab:vitl}]{
    \begin{minipage}[t]{0.45\textwidth}
    \scalebox{0.8}{
    \setlength{\tabcolsep}{1.5mm}{
    \begin{tabular}{c|l|cc|cc|c}
    \toprule
    \multirow{3}[6]{*}{$\epsilon=4/255$} & \multicolumn{1}{c|}{\multirow{3}[6]{*}{Methods}} & \multicolumn{5}{c}{Average} \\
\cmidrule{3-7}          &       & \multicolumn{2}{c|}{Base} & \multicolumn{2}{c|}{New} & \multirow{2}[4]{*}{$\mathrm{H}_\mathrm{b}$} \\
\cmidrule{3-6}          &       & Acc.  & Rob.  & Acc.  & Rob.  &  \\
    \midrule
    hc    & \textbf{TeCoA (ViT-L/14)} & 49.30  & 35.48  & 54.09  & 40.04  & 43.51  \\
    \midrule
    \multirow{2}[1]{*}{tp} & +APT  & 75.98  & 59.94  & 44.68  & 32.47  & 48.18  \\
          & \cellcolor{Gray}+ADAPT & \cellcolor{Gray}\textbf{76.28} & \cellcolor{Gray}\textbf{61.22} & \cellcolor{Gray}50.26  & \cellcolor{Gray}38.01  & \cellcolor{Gray}\textbf{52.87}  \\
    \midrule
    \multirow{2}[1]{*}{mmp} & +FAP  & 72.21  & 58.66  & 50.81  & 38.78  & 52.38  \\
          & \cellcolor{Gray}+ADAPT$_{\text M}$ & \cellcolor{Gray}72.01  & \cellcolor{Gray}57.70  & \cellcolor{Gray}\textbf{54.21} & \cellcolor{Gray}\textbf{40.83} & \cellcolor{Gray}\textbf{53.94} \\
    \bottomrule
    \end{tabular}}}
\end{minipage}
}
\hfill
\subfloat[\label{tab:vit32_fare}]{
    \begin{minipage}[t]{0.45\textwidth}
    \scalebox{0.8}{
    \setlength{\tabcolsep}{1.5mm}{
    \begin{tabular}{c|l|cc|cc|c}
    \toprule
    \multirow{3}[6]{*}{$\epsilon=4/255$} & \multicolumn{1}{c|}{\multirow{3}[6]{*}{Methods}} & \multicolumn{5}{c}{Average} \\
\cmidrule{3-7}          &       & \multicolumn{2}{c|}{Base} & \multicolumn{2}{c|}{New} & \multirow{2}[4]{*}{$\mathrm{H}_\mathrm{b}$} \\
\cmidrule{3-6}          &       & Acc.  & Rob.  & Acc.  & Rob.  &  \\
    \midrule
    hc    & \textbf{FARE (ViT-B/32)} & 50.69  & 18.84  & 51.84  & 20.95  & 28.60  \\
    \midrule
    \multirow{2}[2]{*}{tp} & +APT  & 69.25  & 35.54  & 40.20  & 13.49  & 28.25  \\
          & \cellcolor{Gray}+ADAPT & \cellcolor{Gray}70.30  & \cellcolor{Gray}35.21  & \cellcolor{Gray}48.32  & \cellcolor{Gray}20.31  & \cellcolor{Gray}\textbf{35.54}  \\
    \midrule
    \multirow{2}[2]{*}{mmp} & +FAP  & 64.83  & 37.12  & 45.36  & 22.35  & 36.65  \\
          & \cellcolor{Gray}+ADAPT$_{\text M}$ & \cellcolor{Gray}64.11  & \cellcolor{Gray}36.17  & \cellcolor{Gray}50.41  & \cellcolor{Gray}25.74  & \cellcolor{Gray}\textbf{39.24} \\
    \bottomrule
    \end{tabular}}}
    \end{minipage}
}

\label{table_vlm}
\end{table*}

\begin{table}[htbp]
    \centering
    \caption{Adversarial base-to-new generalization with OpenCLIP ViT-B/32
    pretrained on LAION2B. Results are averaged over 10 datasets with
    $\epsilon=4/255$.}
    \setlength{\tabcolsep}{4pt}
    \begin{tabular}{c|cc|cc}
        \toprule
        \multirow{2}{*}{Method}
        & \multicolumn{2}{c|}{Base}
        & \multicolumn{2}{c}{New} \\
        \cmidrule{2-5}
        & Acc. & Rob. & Acc. & Rob. \\
        \midrule
        APT   & 69.78 & 35.40 & 39.11 & 13.69 \\
        ADAPT & \textbf{71.79} & \textbf{35.57}
              & \textbf{47.13} & \textbf{19.09} \\
        \bottomrule
    \end{tabular}
    \label{tab:openclip}
\end{table}

\subsection{Comprehensive Results of Adversarial Base-to-new Generalization}
\label{appendix:b2n}
In this section, we provide the adversarial base-to-new generalization results under the perturbation budget of $\epsilon=1/255$. 
The results in Table~\ref{appendix:table_base2new} show that, consistent with the findings under the larger perturbation budget ($\epsilon=4/255$), our methods demonstrate good robustness transferability.
APT suffers from significant overfitting to seen classes, exhibiting a large performance drop on new classes. 
In contrast, ADAPT effectively mitigates this issue. On average across 11 datasets, ADAPT outperforms APT by substantial margins of 7.06\% in accuracy and 6.35\% in robustness on new classes.
When extended to the multi-modal prompting, ADAPT$_{\text M}$ consistently surpasses the previous state-of-the-art method FAP. Specifically, ADAPT$_{\text M}$ achieves the highest average performance on new classes with 61.21\% in accuracy and 45.89\% in robustness, outperforming FAP by 4.59\% and 2.55\%, respectively.
Those results show that the disentanglement of robust and pseudo-robust features proposed in ADAPT is effective.

\begin{table*}[!t]
\renewcommand{\arraystretch}{1}
\centering
\caption{Performance of various methods under the adversarial base-to-new generalization setting. All methods (except TeCoA) are trained with 16 instances per base class. `$\mathrm{H}_\mathrm{b}$' denotes the harmonic mean accuracy between $\mathrm{Acc.}_\mathrm{Base}$, $\mathrm{Rob.}_\mathrm{Base}$, $\mathrm{Acc.}_\mathrm{New}$, and $\mathrm{Rob.}_\mathrm{New}$. $\epsilon=1/255$.}

\subfloat[\textbf{Average over 11 datasets}]{
    \begin{minipage}[t]{0.32\textwidth}
    \scalebox{0.72}{
    \setlength{\tabcolsep}{1.5mm}{
    \begin{tabular}{l|cc|cc|c}
    \toprule
    \multicolumn{1}{c|}{\multirow{2}[4]{*}{Methods}} & \multicolumn{2}{c|}{Base} & \multicolumn{2}{c|}{New} & \multirow{2}[4]{*}{$\mathrm{H}_\mathrm{b}$} \\
\cmidrule{2-5}          & Acc.   & Rob.   & Acc.   & Rob.   &  \\
    \midrule
    TeCoA & 54.38  & 40.07  & 55.57  & 41.64  & 46.86  \\
    \midrule
    APT   & 71.45  & 55.30  & 43.97  & 31.68  & 46.30  \\
    \rowcolor{Gray} ADAPT & \textbf{72.81} & \textbf{57.91} & 51.03  & 38.03  & 52.02  \\
    \midrule
    FAP   & 72.36  & 57.27  & 56.62  & 43.34  & 55.54  \\
    \rowcolor{Gray} ADAPT$_{\text M}$ & 72.18  & 56.56  & \textbf{61.21} & \textbf{45.89} & \textbf{57.42} \\
    \bottomrule
    \end{tabular}}}
    \end{minipage}
}
\hfill
\subfloat[ImageNet]{
    \begin{minipage}[t]{0.32\textwidth}
    \scalebox{0.72}{
    \setlength{\tabcolsep}{1.5mm}{
    \begin{tabular}{l|cc|cc|c}
    \toprule
    \multicolumn{1}{c|}{\multirow{2}[4]{*}{Methods}} & \multicolumn{2}{c|}{Base} & \multicolumn{2}{c|}{New} & \multirow{2}[4]{*}{$\mathrm{H}_\mathrm{b}$} \\
\cmidrule{2-5}          & Acc.   & Rob.   & Acc.   & Rob.   &  \\
    \midrule
    TeCoA & 58.91  & 40.75  & 59.71  & 44.16  & 49.44  \\
    \midrule
    APT   & 62.14  & 43.42  & 55.46  & 40.07  & 48.71  \\
    \rowcolor{Gray} ADAPT & 63.28  & 44.34  & 59.00  & 43.63  & \textbf{51.13} \\
    \midrule
    FAP   & 62.44  & 43.61  & 57.18  & 41.90  & 49.81  \\
    \rowcolor{Gray} ADAPT$_{\text M}$ & 63.24  & 43.20  & 59.08  & 42.66  & 50.42  \\
    \bottomrule
    \end{tabular}}}
    \end{minipage}
}
\hfill
\subfloat[Caltech101]{
    \begin{minipage}[t]{0.32\textwidth}
    \scalebox{0.72}{
    \setlength{\tabcolsep}{1.5mm}{
    \begin{tabular}{l|cc|cc|c}
    \toprule
    \multicolumn{1}{c|}{\multirow{2}[4]{*}{Methods}} & \multicolumn{2}{c|}{Base} & \multicolumn{2}{c|}{New} & \multirow{2}[4]{*}{$\mathrm{H}_\mathrm{b}$} \\
\cmidrule{2-5}          & Acc.   & Rob.   & Acc.   & Rob.   &  \\
    \midrule
    TeCoA & 89.35  & 79.41  & 89.85  & 82.86  & 85.14  \\
    \midrule
    APT   & 96.32  & 90.06  & 85.92  & 77.40  & 86.87  \\
    \rowcolor{Gray} ADAPT & 96.84  & 91.03  & 87.99  & 80.13  & \textbf{88.58} \\
    \midrule
    FAP   & 95.35  & 89.15  & 87.55  & 80.35  & 87.77  \\
    \rowcolor{Gray} ADAPT$_{\text M}$ & 95.67  & 88.77  & 88.32  & 81.00  & 88.13  \\
    \bottomrule
    \end{tabular}}}
    \end{minipage}
}
\\
\subfloat[OxfordPets]{
    \begin{minipage}[t]{0.32\textwidth}
    \scalebox{0.72}{
    \setlength{\tabcolsep}{1.5mm}{
    \begin{tabular}{l|cc|cc|c}
    \toprule
    \multicolumn{1}{c|}{\multirow{2}[4]{*}{Methods}} & \multicolumn{2}{c|}{Base} & \multicolumn{2}{c|}{New} & \multirow{2}[4]{*}{$\mathrm{H}_\mathrm{b}$} \\
\cmidrule{2-5}          & Acc.   & Rob.   & Acc.   & Rob.   &  \\
    \midrule
    TeCoA & 84.90  & 70.18  & 91.50  & 79.42  & 80.73  \\
    \midrule
    APT   & 87.40  & 71.08  & 82.83  & 69.85  & 77.07  \\
    \rowcolor{Gray} ADAPT & 90.54  & 77.46  & 91.83  & 81.04  & \textbf{84.77} \\
    \midrule
    FAP   & 89.79  & 76.56  & 91.05  & 79.92  & 83.87  \\
    \rowcolor{Gray} ADAPT$_{\text M}$ & 89.37  & 76.93  & 89.49  & 78.19  & 83.07  \\
    \bottomrule
    \end{tabular}}}
    \end{minipage}
}
\hfill
\subfloat[StanfordCars]{
    \begin{minipage}[t]{0.32\textwidth}
    \scalebox{0.72}{
    \setlength{\tabcolsep}{1.5mm}{
    \begin{tabular}{l|cc|cc|c}
    \toprule
    \multicolumn{1}{c|}{\multirow{2}[4]{*}{Methods}} & \multicolumn{2}{c|}{Base} & \multicolumn{2}{c|}{New} & \multirow{2}[4]{*}{$\mathrm{H}_\mathrm{b}$} \\
\cmidrule{2-5}          & Acc.   & Rob.   & Acc.   & Rob.   &  \\
    \midrule
    TeCoA & 27.94  & 14.79  & 36.27  & 19.68  & 22.00  \\
    \midrule
    APT   & 63.92  & 40.50  & 28.20  & 15.62  & 28.61  \\
    \rowcolor{Gray} ADAPT & 63.37  & 41.23  & 31.54  & 17.26  & 30.85  \\
    \midrule
    FAP   & 61.59  & 36.21  & 50.26  & 29.98  & 41.19  \\
    \rowcolor{Gray} ADAPT$_{\text M}$ & 59.85  & 34.31  & 56.23  & 33.20  & \textbf{42.66} \\
    \bottomrule
    \end{tabular}}}
    \end{minipage}
}
\hfill
\subfloat[Flowers102]{
    \begin{minipage}[t]{0.32\textwidth}
    \scalebox{0.72}{
    \setlength{\tabcolsep}{1.5mm}{
    \begin{tabular}{l|cc|cc|c}
    \toprule
    \multicolumn{1}{c|}{\multirow{2}[4]{*}{Methods}} & \multicolumn{2}{c|}{Base} & \multicolumn{2}{c|}{New} & \multirow{2}[4]{*}{$\mathrm{H}_\mathrm{b}$} \\
\cmidrule{2-5}          & Acc.   & Rob.   & Acc.   & Rob.   &  \\
    \midrule
    TeCoA & 56.79  & 42.07  & 58.30  & 40.64  & 48.11  \\
    \midrule
    APT   & 90.41  & 80.91  & 31.28  & 18.87  & 36.91  \\
    \rowcolor{Gray} ADAPT & 91.93  & 82.62  & 42.77  & 28.51  & 49.12  \\
    \midrule
    FAP   & 89.93  & 79.49  & 51.28  & 37.66  & 57.34  \\
    \rowcolor{Gray} ADAPT$_{\text M}$ & 89.55  & 77.21  & 56.88  & 43.05  & \textbf{61.61} \\
    \bottomrule
    \end{tabular}}}
    \end{minipage}
}
\hfill
\\
\subfloat[Food101]{
    \begin{minipage}[t]{0.32\textwidth}
    \scalebox{0.72}{
    \setlength{\tabcolsep}{1.5mm}{
    \begin{tabular}{l|cc|cc|c}
    \toprule
    \multicolumn{1}{c|}{\multirow{2}[4]{*}{Methods}} & \multicolumn{2}{c|}{Base} & \multicolumn{2}{c|}{New} & \multirow{2}[4]{*}{$\mathrm{H}_\mathrm{b}$} \\
\cmidrule{2-5}          & Acc.   & Rob.   & Acc.   & Rob.   &  \\
    \midrule
    TeCoA & 55.18  & 33.73  & 55.94  & 36.08  & 42.84  \\
    \midrule
    APT   & 60.41  & 37.76  & 45.21  & 26.10  & 38.66  \\
    \rowcolor{Gray} ADAPT & 63.63  & 42.20  & 50.42  & 30.60  & 43.51  \\
    \midrule
    FAP   & 69.76  & 47.42  & 66.25  & 43.90  & \textbf{54.57} \\
    \rowcolor{Gray} ADAPT$_{\text M}$ & 70.19  & 46.43  & 66.59  & 43.61  & 54.25  \\
    \bottomrule
    \end{tabular}}}
    \end{minipage}
}
\hfill
\subfloat[FGVCAircraft]{
    \begin{minipage}[t]{0.32\textwidth}
    \scalebox{0.72}{
    \setlength{\tabcolsep}{1.5mm}{
    \begin{tabular}{l|cc|cc|c}
    \toprule
    \multicolumn{1}{c|}{\multirow{2}[4]{*}{Methods}} & \multicolumn{2}{c|}{Base} & \multicolumn{2}{c|}{New} & \multirow{2}[4]{*}{$\mathrm{H}_\mathrm{b}$} \\
\cmidrule{2-5}          & Acc.   & Rob.   & Acc.   & Rob.   &  \\
    \midrule
    TeCoA & 15.31  & 8.28  & 18.30  & 9.48  & 11.55  \\
        \midrule
    APT   & 28.75  & 17.23  & 13.14  & 7.68  & 13.37  \\
    \rowcolor{Gray} ADAPT & 28.51  & 18.55  & 18.24  & 10.32  & 16.62  \\
        \midrule
    FAP   & 24.73  & 15.67  & 10.92  & 6.12  & 11.14  \\
    \rowcolor{Gray} ADAPT$_{\text M}$ & 25.75  & 17.71  & 23.58  & 13.62  & \textbf{18.95} \\
    \bottomrule
    \end{tabular}}}
    \end{minipage}
}
\hfill
\subfloat[SUN397]{
    \begin{minipage}[t]{0.32\textwidth}
    \scalebox{0.72}{
    \setlength{\tabcolsep}{1.5mm}{
    \begin{tabular}{l|cc|cc|c}
    \toprule
    \multicolumn{1}{c|}{\multirow{2}[4]{*}{Methods}} & \multicolumn{2}{c|}{Base} & \multicolumn{2}{c|}{New} & \multirow{2}[4]{*}{$\mathrm{H}_\mathrm{b}$} \\
\cmidrule{2-5}          & Acc.   & Rob.   & Acc.   & Rob.   &  \\
    \midrule
    TeCoA & 56.81  & 38.40  & 61.19  & 43.61  & 48.24  \\
        \midrule
    APT   & 70.20  & 49.55  & 48.90  & 32.63  & 46.77  \\
    \rowcolor{Gray} ADAPT & 70.61  & 51.31  & 56.35  & 39.11  & 51.97  \\
        \midrule
    FAP   & 70.35  & 50.82  & 62.85  & 44.47  & 55.33  \\
    \rowcolor{Gray} ADAPT$_{\text M}$ & 70.52  & 49.59  & 64.43  & 46.06  & \textbf{55.88} \\
    \bottomrule
    \end{tabular}}}
    \end{minipage}
}
\hfill
\\
\subfloat[DTD]{
    \begin{minipage}[t]{0.32\textwidth}
    \scalebox{0.72}{
    \setlength{\tabcolsep}{1.5mm}{
    \begin{tabular}{l|cc|cc|c}
    \toprule
    \multicolumn{1}{c|}{\multirow{2}[4]{*}{Methods}} & \multicolumn{2}{c|}{Base} & \multicolumn{2}{c|}{New} & \multirow{2}[4]{*}{$\mathrm{H}_\mathrm{b}$} \\
\cmidrule{2-5}          & Acc.   & Rob.   & Acc.   & Rob.   &  \\
    \midrule
    TeCoA & 44.10  & 32.29  & 42.87  & 33.21  & 37.35  \\
        \midrule
    APT   & 64.35  & 48.03  & 33.70  & 24.88  & 37.66  \\
    \rowcolor{Gray} ADAPT & 66.67  & 51.62  & 40.58  & 31.04  & 43.84  \\
        \midrule
    FAP   & 68.75  & 55.67  & 41.55  & 32.85  & 45.97  \\
    \rowcolor{Gray} ADAPT$_{\text M}$ & 69.79  & 55.67  & 49.40  & 36.96  & \textbf{50.26} \\
    \bottomrule
    \end{tabular}}}
    \end{minipage}
}
\hfill
\subfloat[EuroSAT]{
    \begin{minipage}[t]{0.32\textwidth}
    \scalebox{0.72}{
    \setlength{\tabcolsep}{1.5mm}{
    \begin{tabular}{l|cc|cc|c}
    \toprule
    \multicolumn{1}{c|}{\multirow{2}[4]{*}{Methods}} & \multicolumn{2}{c|}{Base} & \multicolumn{2}{c|}{New} & \multirow{2}[4]{*}{$\mathrm{H}_\mathrm{b}$} \\
\cmidrule{2-5}          & Acc.   & Rob.   & Acc.   & Rob.   &  \\
    \midrule
    TeCoA & 52.74  & 40.00  & 38.92  & 28.03  & 37.98  \\
    \midrule
    APT   & 87.02  & 72.26  & 24.23  & 10.97  & 25.36  \\
    \rowcolor{Gray} ADAPT & 89.07  & 75.19  & 38.79  & 27.77  & 46.34  \\
    \midrule
    FAP   & 87.69  & 76.43  & 49.87  & 41.38  & 58.22  \\
    \rowcolor{Gray} ADAPT$_{\text M}$ & 84.93  & 73.33  & 62.41  & 47.31  & \textbf{63.93} \\
    \bottomrule
    \end{tabular}}}
    \end{minipage}
}
\hfill
\subfloat[UCF101]{
    \begin{minipage}[t]{0.32\textwidth}
    \scalebox{0.72}{
    \setlength{\tabcolsep}{1.5mm}{
    \begin{tabular}{l|cc|cc|c}
    \toprule
    \multicolumn{1}{c|}{\multirow{2}[4]{*}{Methods}} & \multicolumn{2}{c|}{Base} & \multicolumn{2}{c|}{New} & \multirow{2}[4]{*}{$\mathrm{H}_\mathrm{b}$} \\
\cmidrule{2-5}          & Acc.   & Rob.   & Acc.   & Rob.   &  \\
    \midrule
    TeCoA & 56.10  & 40.85  & 58.41  & 40.83  & 47.67  \\
    \midrule
    APT   & 75.03  & 57.50  & 34.83  & 24.39  & 39.83  \\
    \rowcolor{Gray} ADAPT & 76.42  & 61.43  & 43.81  & 28.88  & 46.07  \\
    \midrule
    FAP   & 75.54  & 58.89  & 54.03  & 38.18  & 53.39  \\
    \rowcolor{Gray} ADAPT$_{\text M}$ & 75.13  & 59.05  & 56.90  & 39.16  & \textbf{54.53} \\
    \bottomrule
    \end{tabular}}}
    \end{minipage}
}
\label{appendix:table_base2new}
\end{table*}

\subsection{Full Results of Adversarial Few-shot Classification}
\label{appendix:fewshot}

In this section, we provide the full results of adversarial few-shot classification performance on 11 datasets. We evaluate the models using 1, 4, and 16 shots per class. Table~\ref{tab:fewshot} shows that the proposed methods demonstrate good data efficiency compared to the baselines. 
In the average performance across all 11 datasets, ADAPT$_{\text M}$ consistently achieves the highest `H' score across all shot settings (i.e., 19.96\% for 1-shot, 24.78\% for 4-shot, and 30.11\% for 16-shot).
Those results demonstrate that the proposed dual-prompt mechanism with orthogonal loss effectively mitigates the reliance on shortcuts and is highly effective in learning robust representations even when labeled data is extremely scarce.

\begin{table*}[htbp]
  \centering
  \caption{The performance on the 11 datasets for different shots under the adversarial few-shot classification setting.}
  \resizebox{0.8\linewidth}{!}{
    \setlength{\tabcolsep}{2mm}{
    \begin{tabular}{cc|c|cccc|cccc|cccc}
    \toprule
    \multirow{2}[4]{*}{Dataset} & \multirow{2}[4]{*}{$\epsilon=4/255$} & Zero-shot & \multicolumn{4}{c|}{1 shot}   & \multicolumn{4}{c|}{4 shot}   & \multicolumn{4}{c}{16 shot} \\
\cmidrule{3-15}          &       & TeCoA & APT   & FAP   & \cellcolor{Gray}ADAPT & \cellcolor{Gray}ADAPT$_{\text M}$ & APT   & FAP   & \cellcolor{Gray}ADAPT & \cellcolor{Gray}ADAPT$_{\text M}$ & APT   & FAP   & \cellcolor{Gray}ADAPT & \multicolumn{1}{c}{\cellcolor{Gray}ADAPT$_{\text M}$} \\
    \midrule
    \multirow{3}[2]{*}{\textbf{Average}} & Acc.  & 33.67  & 33.20  & 37.78  & \cellcolor{Gray}\textbf{38.10} & \cellcolor{Gray}38.06  & 43.32  & 44.42  & \cellcolor{Gray}44.89 & \cellcolor{Gray}\textbf{45.09}  & 51.08  & 51.11  & \cellcolor{Gray}\textbf{52.68} & \cellcolor{Gray}49.89  \\
          & Rob.  & 10.79  & 11.79  & 13.19  & \cellcolor{Gray}12.69  & \cellcolor{Gray}\textbf{13.53} & 14.66  & 15.31  & \cellcolor{Gray}15.80 & \cellcolor{Gray}\textbf{17.09}  & 20.26  & 19.65  & \cellcolor{Gray}20.51  & \cellcolor{Gray}\textbf{21.56} \\
          & H     & 16.34  & 17.40  & 19.55  & \cellcolor{Gray}19.04  & \cellcolor{Gray}\textbf{19.96} & 21.91  & 22.77  & \cellcolor{Gray}23.37 & \cellcolor{Gray}\textbf{24.78}  & 29.02  & 28.39  & \cellcolor{Gray}29.53  & \cellcolor{Gray}\textbf{30.11} \\
    \midrule
    \multirow{3}[2]{*}{ImageNet} & Acc.  & 40.11  & 37.88  & 36.95  & \cellcolor{Gray}\textbf{39.85} & \cellcolor{Gray}36.62  & 39.13  & 38.94  & \cellcolor{Gray}\textbf{40.84} & \cellcolor{Gray}37.60  & 41.06  & 40.45  & \cellcolor{Gray}\textbf{42.27} & \cellcolor{Gray}38.13  \\
          & Rob.  & 10.14  & 10.79  & \textbf{11.41} & \cellcolor{Gray}10.96  & \cellcolor{Gray}11.05  & 11.49  & \textbf{11.86} & \cellcolor{Gray}11.15  & \cellcolor{Gray}11.83  & 12.02  & 12.03  & \cellcolor{Gray}11.64  & \cellcolor{Gray}\textbf{12.19} \\
          & H     & 16.19  & 16.80  & \textbf{17.44} & \cellcolor{Gray}17.19  & \cellcolor{Gray}16.98  & 17.76  & \textbf{18.18} & \cellcolor{Gray}17.52  & \cellcolor{Gray}18.00  & 18.60  & \textbf{18.54} & \cellcolor{Gray}18.25  & \cellcolor{Gray}18.47  \\
    \midrule
    \multirow{3}[2]{*}{Caltech101} & Acc.  & 78.78  & 77.89  & 76.67  & \cellcolor{Gray}\textbf{81.05} & \cellcolor{Gray}79.19  & 81.62  & 77.61  & \cellcolor{Gray}\textbf{84.71} & \cellcolor{Gray}82.76  & 86.29  & 80.65  & \cellcolor{Gray}\textbf{88.80} & \cellcolor{Gray}84.62  \\
          & Rob.  & 43.61  & 45.84  & \textbf{48.36} & \cellcolor{Gray}45.72  & \cellcolor{Gray}48.32  & 47.59  & 51.81  & \cellcolor{Gray}\textbf{52.78} & \cellcolor{Gray}52.01  & 56.75  & 55.38  & \cellcolor{Gray}\textbf{57.36} & \cellcolor{Gray}57.24  \\
          & H     & 56.14  & 57.71  & 59.31  & \cellcolor{Gray}58.46  & \cellcolor{Gray}\textbf{60.02} & 60.12  & 62.14  & \cellcolor{Gray}\textbf{65.04} & \cellcolor{Gray}63.88  & 68.47  & 65.67  & \cellcolor{Gray}\textbf{69.70} & \cellcolor{Gray}68.29  \\
    \midrule
    \multirow{3}[2]{*}{OxfordPets} & Acc.  & 66.26  & 59.58  & 62.14  & \cellcolor{Gray}62.12  & \cellcolor{Gray}\textbf{63.40} & 65.99  & 64.57  & \cellcolor{Gray}\textbf{69.96} & \cellcolor{Gray}67.95  & 67.29  & 67.18  & \cellcolor{Gray}\textbf{72.04} & \cellcolor{Gray}68.06  \\
          & Rob.  & 15.56  & 15.54  & \textbf{19.81} & \cellcolor{Gray}15.70  & \cellcolor{Gray}17.85  & 17.80  & \textbf{22.24} & \cellcolor{Gray}18.15  & \cellcolor{Gray}21.59  & 19.98  & \textbf{24.88} & \cellcolor{Gray}20.63  & \cellcolor{Gray}24.12  \\
          & H     & 25.20  & 24.65  & \textbf{30.04} & \cellcolor{Gray}25.07  & \cellcolor{Gray}27.86  & 28.04  & \textbf{33.08} & \cellcolor{Gray}28.82  & \cellcolor{Gray}32.77  & 30.81  & \textbf{36.31} & \cellcolor{Gray}32.07  & \cellcolor{Gray}35.62  \\
    \midrule
    \multicolumn{1}{c}{\multirow{3}[2]{*}{Stanford\newline{}Cars}} & Acc.  & 10.32  & 19.49  & \textbf{27.41} & \cellcolor{Gray}21.23  & \cellcolor{Gray}26.90  & 23.93  & 33.96  & \cellcolor{Gray}28.22  & \cellcolor{Gray}\textbf{34.05} & 31.60  & \textbf{41.54} & \cellcolor{Gray}35.78  & \cellcolor{Gray}38.68  \\
          & Rob.  & 0.99  & 2.90  & 3.27  & \cellcolor{Gray}3.48  & \cellcolor{Gray}\textbf{3.52} & 4.44  & 4.76  & \cellcolor{Gray}4.66  & \cellcolor{Gray}\textbf{5.32} & \textbf{7.70} & 6.62  & \cellcolor{Gray}7.35  & \cellcolor{Gray}7.18  \\
          & H     & 1.81  & 5.05  & 5.84  & \cellcolor{Gray}5.98  & \cellcolor{Gray}\textbf{6.23} & 7.49  & 8.35  & \cellcolor{Gray}8.00  & \cellcolor{Gray}\textbf{9.20} & \textbf{12.38} & 11.42  & \cellcolor{Gray}12.19  & \cellcolor{Gray}12.11  \\
    \midrule
    \multirow{3}[2]{*}{Flowers102} & Acc.  & 30.13  & 35.93  & \textbf{42.55} & \cellcolor{Gray}36.50  & \cellcolor{Gray}37.23  & 60.25  & 55.58  & \cellcolor{Gray}\textbf{60.74} & \cellcolor{Gray}57.46  & \textbf{76.41} & 66.26  & \cellcolor{Gray}75.03  & \cellcolor{Gray}71.38  \\
          & Rob.  & 8.93  & 11.49  & \textbf{14.21} & \cellcolor{Gray}10.19  & \cellcolor{Gray}13.93  & 22.45  & 20.46  & \cellcolor{Gray}\textbf{22.98} & \cellcolor{Gray}21.40  & \textbf{37.52} & 30.25  & \cellcolor{Gray}37.03  & \cellcolor{Gray}33.41  \\
          & H     & 13.78  & 17.41  & \textbf{21.30} & \cellcolor{Gray}15.93  & \cellcolor{Gray}20.27  & 32.71  & 29.91  & \cellcolor{Gray}\textbf{33.34} & \cellcolor{Gray}31.19  & \textbf{50.33} & 41.54  & \cellcolor{Gray}49.59  & \cellcolor{Gray}45.52  \\
    \midrule
    \multirow{3}[2]{*}{Food101} & Acc.  & 23.52  & 20.73  & \textbf{33.46} & \cellcolor{Gray}26.65  & \cellcolor{Gray}30.38  & 21.97  & \textbf{38.84} & \cellcolor{Gray}29.36  & \cellcolor{Gray}31.66  & 30.39  & \textbf{44.35} & \cellcolor{Gray}35.14  & \cellcolor{Gray}34.86  \\
          & Rob.  & 3.27  & 3.42  & 5.46  & \cellcolor{Gray}4.97  & \cellcolor{Gray}\textbf{6.62} & 4.01  & 6.39  & \cellcolor{Gray}5.64  & \cellcolor{Gray}\textbf{7.73} & 7.90  & 8.31  & \cellcolor{Gray}7.95  & \cellcolor{Gray}\textbf{9.94} \\
          & H     & 5.74  & 5.87  & 9.39  & \cellcolor{Gray}8.38  & \cellcolor{Gray}\textbf{10.87} & 6.78  & 10.97  & \cellcolor{Gray}9.46  & \cellcolor{Gray}\textbf{12.43} & 12.54  & 14.00  & \cellcolor{Gray}12.97  & \cellcolor{Gray}\textbf{15.47} \\
    \midrule
    \multicolumn{1}{c}{\multirow{3}[2]{*}{FGVC\newline{}Aircraft}} & Acc.  & 7.17  & 3.90  & 11.94  & \cellcolor{Gray}10.11  & \cellcolor{Gray}\textbf{13.08} & 13.65  & \textbf{15.27} & \cellcolor{Gray}10.29  & \cellcolor{Gray}14.37  & \textbf{20.31} & 18.39  & \cellcolor{Gray}19.29  & \cellcolor{Gray}19.08  \\
          & Rob.  & 0.36  & 2.19  & 2.52  & \cellcolor{Gray}2.16  & \cellcolor{Gray}\textbf{3.15} & 3.33  & 2.73  & \cellcolor{Gray}\textbf{3.60} & \cellcolor{Gray}3.54  & 6.15  & 5.34  & \cellcolor{Gray}5.79  & \cellcolor{Gray}\textbf{6.75} \\
          & H     & 0.69  & 2.80  & 4.16  & \cellcolor{Gray}3.56  & \cellcolor{Gray}\textbf{5.08} & 5.35  & 4.63  & \cellcolor{Gray}5.33  & \cellcolor{Gray}\textbf{5.68} & 9.44  & 8.28  & \cellcolor{Gray}8.91  & \cellcolor{Gray}\textbf{9.97} \\
    \midrule
    \multirow{3}[2]{*}{SUN397} & Acc.  & 33.24  & 32.32  & \textbf{36.23} & \cellcolor{Gray}35.86  & \cellcolor{Gray}33.63  & 39.06  & 39.68  & \cellcolor{Gray}\textbf{40.98} & \cellcolor{Gray}38.68  & 45.21  & 43.35  & \cellcolor{Gray}\textbf{46.10} & \cellcolor{Gray}42.29  \\
          & Rob.  & 6.20  & 5.76  & 8.16  & \cellcolor{Gray}7.25  & \cellcolor{Gray}\textbf{8.32} & 7.92  & 9.69  & \cellcolor{Gray}8.70  & \cellcolor{Gray}\textbf{9.92} & 11.27  & \textbf{11.64} & \cellcolor{Gray}11.04  & \cellcolor{Gray}11.62  \\
          & H     & 10.45  & 9.78  & 13.32  & \cellcolor{Gray}12.06  & \cellcolor{Gray}\textbf{13.34} & 13.17  & 15.58  & \cellcolor{Gray}14.35  & \cellcolor{Gray}\textbf{15.79} & 18.04  & \textbf{18.35} & \cellcolor{Gray}17.81  & \cellcolor{Gray}18.23  \\
    \midrule
    \multirow{3}[2]{*}{DTD} & Acc.  & 24.29  & 23.29  & 24.65  & \cellcolor{Gray}28.55  & \cellcolor{Gray}\textbf{29.02} & 36.17  & \textbf{37.41} & \cellcolor{Gray}37.06  & \cellcolor{Gray}37.17  & \textbf{45.86} & 44.21  & \cellcolor{Gray}44.39  & \cellcolor{Gray}43.68  \\
          & Rob.  & 11.35  & 9.46  & 9.04  & \cellcolor{Gray}\textbf{12.35} & \cellcolor{Gray}12.23  & 14.13  & 15.60  & \cellcolor{Gray}14.54  & \cellcolor{Gray}\textbf{17.55} & 21.51  & 21.63  & \cellcolor{Gray}\textbf{22.64} & \cellcolor{Gray}22.52  \\
          & H     & 15.47  & 13.45  & 13.23  & \cellcolor{Gray}\textbf{17.24} & \cellcolor{Gray}17.21  & 20.32  & 22.02  & \cellcolor{Gray}20.89  & \cellcolor{Gray}\textbf{23.84} & 29.28  & 29.05  & \cellcolor{Gray}\textbf{29.99} & \cellcolor{Gray}29.72  \\
    \midrule
    \multirow{3}[2]{*}{EuroSAT} & Acc.  & 19.56  & 21.09  & 28.02  & \cellcolor{Gray}\textbf{37.28} & \cellcolor{Gray}31.36  & \textbf{49.98} & 40.58  & \cellcolor{Gray}41.90  & \cellcolor{Gray}48.30  & 64.33  & \textbf{64.51} & \cellcolor{Gray}64.04  & \cellcolor{Gray}56.23  \\
          & Rob.  & 11.22  & 14.53  & 13.94  & \cellcolor{Gray}\textbf{15.74} & \cellcolor{Gray}13.06  & 16.44  & 8.57  & \cellcolor{Gray}19.40  & \cellcolor{Gray}\textbf{21.51} & 25.54  & 21.28  & \cellcolor{Gray}25.93  & \cellcolor{Gray}\textbf{32.27} \\
          & H     & 14.26  & 17.21  & 18.62  & \cellcolor{Gray}\textbf{22.13} & \cellcolor{Gray}18.44  & 24.74  & 14.15  & \cellcolor{Gray}26.52  & \cellcolor{Gray}\textbf{29.76} & 36.56  & 32.00  & \cellcolor{Gray}36.91  & \cellcolor{Gray}\textbf{41.01} \\
    \midrule
    \multirow{3}[2]{*}{UCF101} & Acc.  & 36.98  & 33.10  & 35.58  & \cellcolor{Gray}\textbf{39.86} & \cellcolor{Gray}37.83  & 44.75  & 46.21  & \cellcolor{Gray}\textbf{49.78} & \cellcolor{Gray}46.00  & 53.16  & 51.28  & \cellcolor{Gray}\textbf{56.62} & \cellcolor{Gray}51.78  \\
          & Rob.  & 7.01  & 7.72  & 8.86  & \cellcolor{Gray}\textbf{11.05} & \cellcolor{Gray}10.73  & 11.66  & 14.27  & \cellcolor{Gray}12.16  & \cellcolor{Gray}\textbf{15.57} & 16.55  & 18.82  & \cellcolor{Gray}18.29  & \cellcolor{Gray}\textbf{19.96} \\
          & H     & 11.79  & 12.52  & 14.19  & \cellcolor{Gray}\textbf{17.30} & \cellcolor{Gray}16.72  & 18.50  & 21.81  & \cellcolor{Gray}19.55  & \cellcolor{Gray}\textbf{23.27} & 25.24  & 27.53  & \cellcolor{Gray}27.65  & \cellcolor{Gray}\textbf{28.81} \\
    \bottomrule
    \end{tabular}}}
  \label{tab:fewshot}%
\end{table*}

\subsection{Results of Adversarial Domain Generalization}
\label{appendix:dg}

\noindent \textbf{Adversarial domain generalization.}
In this scenario, we assess the capability of prompts for OOD data. Specifically, models are tuned using 16-shot samples from each of the 1,000 classes on ImageNet (source), and then evaluated on four different target domains (i.e., ImageNet-V2, ImageNet-Sketch, ImageNet-A, and ImageNet-R).
Table \ref{tab_dg} shows that ADAPT achieves the highest accuracy of 41.34\% and competitive robustness compared to baselines on the source domain.
More importantly, ADAPT achieves the best average performance of 23.92\%, 7.84\%, and 11.81\% in accuracy, robustness, and `H', respectively. 
Specifically, ADAPT surpasses APT by substantial margins on challenging domains like ImageNet-Sketch and ImageNet-R, achieving `H' score gains of 0.40\% and 0.85\%, respectively.
Those results show that the robustness learned by ADAPT is not merely due to overfitting the source distribution shortcuts, but stems from disentangled features that can be effectively transferred to OOD data.

\begin{table*}[!t]
  \centering
  \caption{Performance under adversarial domain generalization setting. $\epsilon=4/255$.}
  \resizebox{\linewidth}{!}{
  \setlength{\tabcolsep}{2mm}{
    \begin{tabular}{l|ccc|cccccccccccc|ccc}
    \toprule
    \multirow{3}[6]{*}{Method} & \multicolumn{3}{c|}{Source} & \multicolumn{15}{c}{Target} \\
\cmidrule{2-19}          & \multicolumn{3}{c|}{ImageNet} & \multicolumn{3}{c}{ImageNet-V2} & \multicolumn{3}{c}{ImageNet-Sketch} & \multicolumn{3}{c}{ImageNet-A} & \multicolumn{3}{c|}{ImageNet-R} & \multicolumn{3}{c}{Average} \\
\cmidrule{2-19}          & Acc.  & Rob.  & H     & Acc.  & Rob.  & H     & Acc.  & Rob.  & H     & Acc.  & Rob.  & H     & Acc.  & Rob.  & \multicolumn{1}{c}{H} & Acc.  & Rob.  & H \\
    \midrule
    TeCoA & 40.11  & 10.14  & 16.19  & 33.11  & 7.49  & 12.22  & 17.59  & 7.24  & 10.26  & 4.04  & 0.28  & 0.52  & 37.52  & 12.51  & 18.76  & 23.07  & 6.88  & 10.60  \\
    \midrule
    APT   & 41.06  & 12.02  & 18.60  & 33.67  & 9.09  & 14.32  & 18.22  & 7.87  & 10.99  & 4.19  & 0.36  & 0.66  & 37.04  & 13.29  & 19.56  & 23.28  & 7.65  & 11.52  \\
    FAP   & 40.32  & 12.06  & 18.57  & 32.81  & \textbf{9.17} & \textbf{14.33} & 16.42  & 7.11  & 9.92  & 3.87  & \textbf{0.43} & \textbf{0.77} & 36.04  & 13.55  & 19.70  & 22.29  & 7.57  & 11.30  \\
    \rowcolor{Gray} ADAPT & \textbf{41.34} & 12.02  & \textbf{18.62} & \textbf{34.24} & 8.92  & 14.15  & \textbf{18.56} & \textbf{8.22} & \textbf{11.39} & \textbf{4.21} & 0.36  & 0.66  & \textbf{38.65} & \textbf{13.87} & \textbf{20.41} & \textbf{23.92} & \textbf{7.84} & \textbf{11.81} \\
    \rowcolor{Gray} ADAPT$_{\text M}$ & 38.20  & \textbf{12.22} & 18.52  & 31.35  & 8.90  & 13.86  & 16.84  & 7.55  & 10.43  & 3.49  & \textbf{0.43} & \textbf{0.77} & 35.79  & 13.34  & 19.44  & 21.87  & 7.56  & 11.23  \\
    \bottomrule
    \end{tabular}}}
  \label{tab_dg}%
\end{table*}%

\subsection{Results of Adversarial Cross-dataset Generalization}
\label{appendix:xd}
To further evaluate the generalization capability of the learned prompts across different distributions, we conduct experiments under the adversarial cross-dataset generalization setting. 
We train the models on ImageNet (Source) using 16 shots per class and evaluate them directly on 10 other datasets (Target).
As shown in Table~\ref{tab_xd}, our proposed methods consistently outperform the baselines in terms of overall performance. 
ADAPT$_{\text M}$ achieves the highest average harmonic mean of 17.43\%. 
Those results demonstrate that the disentangled features learned by our methods are not only robust on the source domain but also highly transferable to unseen domains.

\begin{table*}[ht]
  \centering
  \caption{Performance under adversarial cross-dataset generalization setting. $\epsilon=4/255$.}
    \resizebox{\linewidth}{!}{
    \setlength{\tabcolsep}{1mm}{
    \begin{tabular}{cl|c|ccccccccccc}
    \toprule
    \multirow{2}[4]{*}{16 shot} & \multirow{2}[4]{*}{Method} & Source & \multicolumn{11}{c}{Target} \\
\cmidrule{3-14}          &       & ImageNet & Caltech101 & OxfordPets & StanfordCars & Flowers102 & Food101 & FGVCAircraft & SUN397 & DTD   & EuroSAT & UCF101 & \textbf{Avg.} \\
    \midrule
    \multirow{5}[2]{*}{Acc.} & TeCoA & 40.11  & 78.78  & \textbf{66.26} & 10.32  & 30.13  & \textbf{23.52} & \textbf{7.17} & \textbf{33.24} & \textbf{24.29} & 19.56  & \textbf{36.98} & \textbf{33.03} \\
          & APT   & 41.06  & 77.40  & 64.19  & \textbf{11.21} & 27.65  & 23.77  & 4.17  & 31.85  & 23.52  & 16.98  & 33.73  & 31.45  \\
          & FAP   & 40.32  & 79.19  & 63.67  & 9.64  & \textbf{30.25} & 22.82  & 5.43  & 31.92  & 23.23  & 16.79  & 32.25  & 31.52  \\
          & \cellcolor{Gray}ADAPT & \cellcolor{Gray}\textbf{41.34} & \cellcolor{Gray}\textbf{79.80} & \cellcolor{Gray}65.85  & \cellcolor{Gray}10.55  & \cellcolor{Gray}29.60  & \cellcolor{Gray}23.28  & \cellcolor{Gray}5.91  & \cellcolor{Gray}32.57  & \cellcolor{Gray}24.05  & \cellcolor{Gray}17.40  & \cellcolor{Gray}34.97  & \cellcolor{Gray}32.40  \\
          & \cellcolor{Gray}ADAPT$_{\text M}$ & \cellcolor{Gray}38.20  & \cellcolor{Gray}78.86  & \cellcolor{Gray}60.92  & \cellcolor{Gray}9.23  & \cellcolor{Gray}29.92  & \cellcolor{Gray}20.34  & \cellcolor{Gray}6.30  & \cellcolor{Gray}28.90  & \cellcolor{Gray}22.28  & \cellcolor{Gray}\textbf{22.74} & \cellcolor{Gray}30.43  & \cellcolor{Gray}30.99  \\
    \midrule
    \multirow{5}[2]{*}{Rob.} & TeCoA & 10.14  & 43.61  & 15.56  & 0.99  & 8.93  & 3.27  & 0.36  & 6.20  & \textbf{11.35} & 11.22  & 7.01  & 10.85  \\
          & APT   & 12.02  & 45.40  & 20.44  & 1.53  & 9.05  & \textbf{3.90} & 0.54  & \textbf{7.22} & 10.82  & 11.19  & 6.93  & 11.70  \\
          & FAP   & 12.06  & 46.25  & \textbf{20.71} & \textbf{1.55} & 9.74  & 3.67  & 0.66  & 7.15  & 10.82  & 11.38  & 7.45  & 11.94  \\
          & \cellcolor{Gray}ADAPT & \cellcolor{Gray}12.02  & \cellcolor{Gray}45.27  & \cellcolor{Gray}19.90  & \cellcolor{Gray}1.44  & \cellcolor{Gray}9.14  & \cellcolor{Gray}3.63  & \cellcolor{Gray}\textbf{1.14} & \cellcolor{Gray}6.84  & \cellcolor{Gray}10.82  & \cellcolor{Gray}11.30  & \cellcolor{Gray}\textbf{7.69} & \cellcolor{Gray}11.72  \\
          & \cellcolor{Gray}ADAPT$_{\text M}$ & \cellcolor{Gray}\textbf{12.22} & \cellcolor{Gray}\textbf{46.73} & \cellcolor{Gray}19.51  & \cellcolor{Gray}1.38  & \cellcolor{Gray}\textbf{10.11} & \cellcolor{Gray}3.48  & \cellcolor{Gray}1.08  & \cellcolor{Gray}6.95  & \cellcolor{Gray}10.64  & \cellcolor{Gray}\textbf{14.58} & \cellcolor{Gray}6.82  & \cellcolor{Gray}\textbf{12.13} \\
    \midrule
    \multirow{5}[2]{*}{H} & TeCoA & 16.19  & 56.14  & 25.20  & 1.81  & 13.78  & 5.74  & 0.69  & 10.45  & \textbf{15.47} & 14.26  & 11.79  & 16.33  \\
          & APT   & 18.60  & 57.23  & 31.01  & \textbf{2.69} & 13.64  & \textbf{6.70} & 0.96  & \textbf{11.77} & 14.82  & 13.49  & 11.50  & 17.06  \\
          & FAP   & 18.57  & 58.40  & \textbf{31.25} & 2.67  & 14.74  & 6.32  & 1.18  & 11.68  & 14.76  & 13.57  & 12.10  & 17.32  \\
          & \cellcolor{Gray}ADAPT & \cellcolor{Gray}\textbf{18.62} & \cellcolor{Gray}57.77  & \cellcolor{Gray}30.56  & \cellcolor{Gray}2.53  & \cellcolor{Gray}13.97  & \cellcolor{Gray}6.28  & \cellcolor{Gray}\textbf{1.91} & \cellcolor{Gray}11.31  & \cellcolor{Gray}14.93  & \cellcolor{Gray}13.70  & \cellcolor{Gray}\textbf{12.61} & \cellcolor{Gray}17.21  \\
          & \cellcolor{Gray}ADAPT$_{\text M}$ & \cellcolor{Gray}18.52  & \cellcolor{Gray}\textbf{58.69} & \cellcolor{Gray}29.55  & \cellcolor{Gray}2.40  & \cellcolor{Gray}\textbf{15.11} & \cellcolor{Gray}5.94  & \cellcolor{Gray}1.84  & \cellcolor{Gray}11.21  & \cellcolor{Gray}14.40  & \cellcolor{Gray}\textbf{17.77} & \cellcolor{Gray}11.14  & \cellcolor{Gray}\textbf{17.43} \\
    \bottomrule
    \end{tabular}}}
  \label{tab_xd}%
\end{table*}%

\begin{table*}[htbp]
  \centering
  \caption{Computational cost and performance comparison on ImageNet under the adversarial base-to-new generalization setting with $\epsilon=4/255$}
    \resizebox{0.6\linewidth}{!}{
    \setlength{\tabcolsep}{2mm}{
    \begin{tabular}{lccc|cc|cc|c}
    \toprule
    \multicolumn{1}{l}{\multirow{2}[4]{*}{Method}} & {\multirow{2}[4]{*}{Train.Memory}} & \multirow{2}[4]{*}{Train.time (h)} & \multirow{2}[4]{*}{Infer.time (ms)} & \multicolumn{2}{c|}{Base} & \multicolumn{2}{c|}{New} & \multirow{2}[4]{*}{$\mathrm{H}_\mathrm{b}$} \\
\cmidrule{5-8}          &       &       &       & Acc   & Rob   & Acc   & Rob   &  \\
    \midrule
    APT   & 11454M & 5.83  & 1.70  & 44.54  & 13.78  & 38.56  & 12.47  & 19.89  \\
    \rowcolor{Gray} ADAPT & 27470M & 9.65  & 1.72  & \textbf{47.07} & 13.49  & \textbf{43.36} & 13.71  & 20.90  \\
    FAP   & 19808M & 11.22 & 1.97  & 43.29  & \textbf{13.89} & 37.43  & 13.05  & 20.16  \\
    \rowcolor{Gray} ADAPT$_{\text M}$ & 27530M & 13.81 & 1.69  & 41.67  & 13.88  & 39.47  & \textbf{14.49} & \textbf{21.01} \\
    \bottomrule
    \end{tabular}}}
  \label{tab:computational_cost}%
\end{table*}%

\subsection{Computational Cost Analysis}
\label{appendix:cost}

To evaluate the practicality and efficiency of our proposed method, we provide a comprehensive analysis of its computational overhead compared to APT and FAP. For a fair comparison, all experiments were conducted on Quadro RTX 8000 under the adversarial base-to-new generalization setting with $\epsilon=4/255$.

The results in \cref{tab:computational_cost} demonstrate that ADAPT achieves a highly favorable trade-off between computational cost and robustness.

\textbf{Training Memory.} The training memory required by ADAPT (27470M) and ADAPT$_{\text M}$ (27530M) is higher than that of APT (11454M) and FAP (19808M). This increase primarily stems from the maintenance of the pool of decoy prompt. However, it is worth noting that this memory consumption is proportional to the number of classes. Thus, the memory requirement decreases significantly on datasets with fewer categories than ImageNet.
We consider this training-time cost a justifiable trade-off for the good generalization performance achieved.

\textbf{Training Time.} In terms of training duration, ADAPT (9.65h) takes longer than the lightweight APT (5.83h) but is notably more efficient than FAP (11.22h). This indicates that while our dual-prompt mechanism introduces some computational overhead, it is still more time-efficient than existing multi-modal prompt tuning approaches.

\textbf{Inference Time.} Crucially, during the inference phase, the per-image processing time for ADAPT (1.72ms) and ADAPT$_{\text M}$ (1.69ms) is comparable to APT (1.70ms) and faster than FAP (1.97ms). This is because the decoy prompts are used solely for optimization constraints during training and are discarded during inference. Consequently, our method introduces zero computational overhead during deployment compared to standard adversarial prompt tuning, making it highly suitable for practical, real-time applications.

\textbf{Complexity and scalability.}
ADAPT introduces no additional inference cost over APT because the decoy
prompts are discarded after training. During each training iteration,
adversarial examples are generated once using the target prompt, and only
one decoy prompt is sampled and updated. Therefore, the decoy computation
scales with $\mathcal{O}(C)$ rather than $\mathcal{O}(KC)$, where $C$ is
the number of classes and $K$ is the size of the decoy pool. The main
overhead arises from retaining the text-embedding computation graph of the
active decoy prompt, and thus becomes more visible on datasets with many
classes.
On DTD with ViT-B/32, ADAPT
increases memory and training time from 2394M/0.209h to 2810M/0.276h.
With ViT-L/14, they increase from 9190M/0.95h to 9964M/1.10h. These
results indicate that the additional training cost remains manageable on a
larger backbone. This cost is most useful when robust transfer to unseen
classes is the main objective, for which ADAPT substantially improves over
APT while preserving strong base-class performance.

In summary, while ADAPT incurs a moderate increase in training resources to optimize the dual-prompt mechanism, it maintains high inference efficiency. This characteristic, combined with the substantial improvement in robustness, highlights the practicality of our method.

\subsection{Empirical Validation of the Semantic--Shortcut Decomposition}
\label{app:projection_analysis}

The semantic--shortcut decomposition in Sec. 3.4 is an analytical abstraction rather than an exact structural claim about CLIP features. To connect this abstraction with real CLIP representations, we conduct an
image-side projection analysis on DTD. With the CLIP encoders frozen and
the hand-crafted prompt used as the classifier, we learn a projection matrix
$P\in\mathbb{R}^{512\times512}$ on the base classes through adversarial
training. Each image feature $z$ is decomposed as
\begin{equation}
    z_U=z_P,\qquad z_S=z-z_U,
\end{equation}
where $z_U$ denotes the learned base-useful component and $z_S$ denotes its
residual. Both components are classified using the same hand-crafted prompt.

As shown in \cref{tab:projection_analysis}, $z_U$ performs better on
the base classes, whereas $z_S$ transfers better to the new classes. This
result suggests that real CLIP features can be separated into a base-useful
component and a more transferable residual component, providing empirical
support for the theoretical abstraction used in our analysis.

\begin{table}[ht]
    \centering
    \caption{Image-side projection analysis on DTD under adversarial base-to-new generalization.}
    \setlength{\tabcolsep}{4pt}
    \begin{tabular}{c|cc|cc}
        \toprule
        & $\mathrm{Acc.}_{\mathrm{Base}}$
        & $\mathrm{Rob.}_{\mathrm{Base}}$
        & $\mathrm{Acc.}_{\mathrm{New}}$
        & $\mathrm{Rob.}_{\mathrm{New}}$ \\
        \midrule
        $z_U$ & \textbf{37.27} & \textbf{15.74} & 22.45 & 10.65 \\
        $z_S$ & 19.93 & 9.78 & \textbf{30.56} & \textbf{14.98} \\
        \bottomrule
    \end{tabular}
    \label{tab:projection_analysis}
\end{table}

\subsection{Robust Generalization Overfitting under Different Training Settings}
\label{app:rgo_settings}

To examine whether robust generalization overfitting is caused by a
particular training configuration, we evaluate APT using learning rates
from 0.001 to 0.004 and training durations from 50 to 200 epochs.
\cref{fig:rgo_settings} shows that the robust loss gap between base
and new classes emerges across these settings, although its magnitude
varies. This indicates that robust generalization overfitting is not tied
to a single learning rate or training duration.

\begin{figure}[ht]
    \centering
    \includegraphics[width=0.5\linewidth]{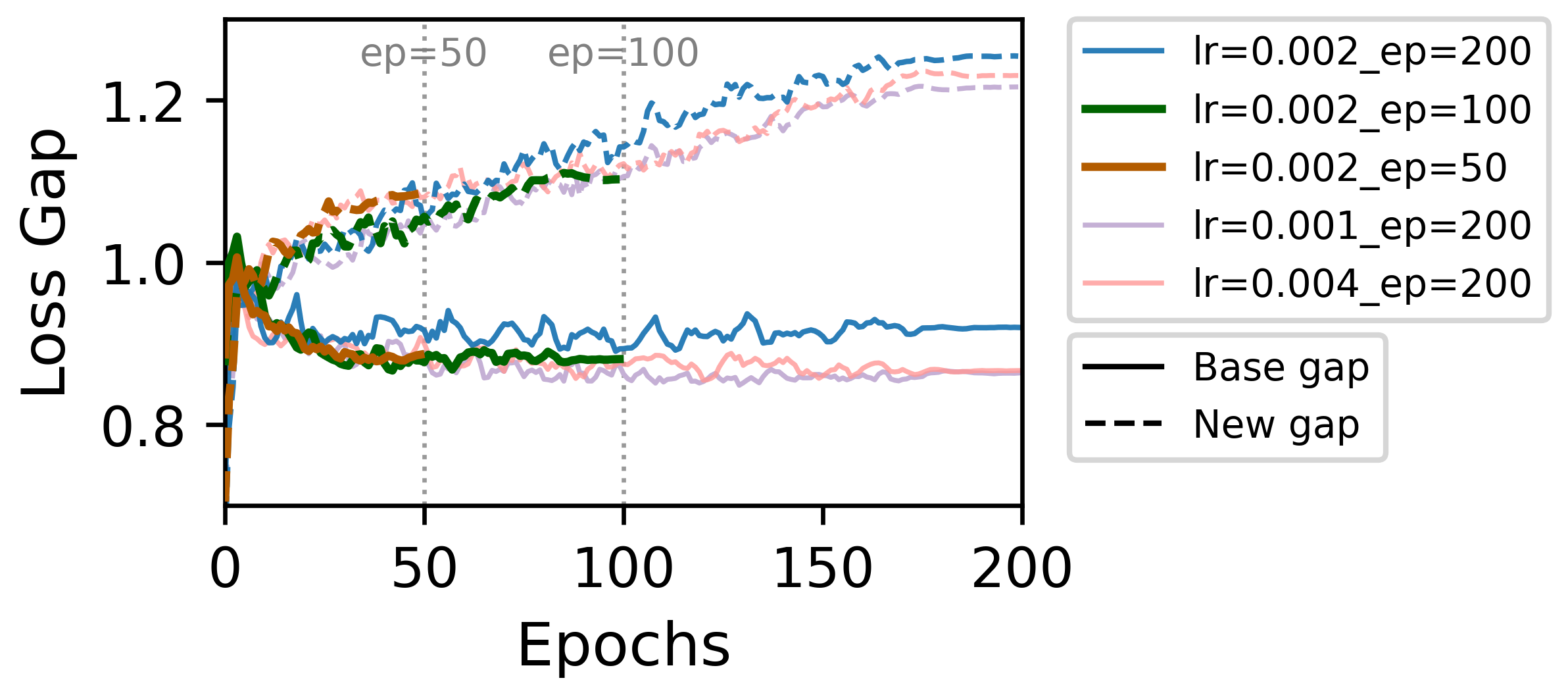}
    \caption{Robust generalization overfitting under different learning
    rates and training epochs.}
    \label{fig:rgo_settings}
    \Description{.}
\end{figure}

\subsection{Decoy-Guided Adaptive Attack}
\label{app:dgaa}
We design a Decoy-Guided Adaptive Attack (DGAA) tailored to ADAPT.
For each non-ground-truth class, DGAA measures (1) how strongly that class
competes with the ground-truth class under the target prompt and (2) how
strongly it matches the shortcut information captured by the decoy prompts.
DGAA combines these scores to select the three most shortcut-confusing
classes, runs target-prompt margin PGD toward each selected class, and keeps
the adversarial example that most strongly reduces the target-prompt margin.

\begin{table}[ht]
    \centering
    \caption{Robustness of ADAPT under the decoy-guided adaptive attack.
    Results are averaged over 11 datasets with $\epsilon=4/255$.}
    \label{tab:dgaa}
    \setlength{\tabcolsep}{5pt}
    \begin{tabular}{c|ccc|ccc}
        \toprule
        & \multicolumn{3}{c|}{Base}
        & \multicolumn{3}{c}{New} \\
        \cmidrule{2-7}
        & Acc. & PGD & DGAA & Acc. & PGD & DGAA \\
        \midrule
        ADAPT & 60.04 & 27.48 & 23.62 & 38.98 & 15.65 & 13.66 \\
        \bottomrule
    \end{tabular}
\end{table}

As shown in Table~\ref{tab:dgaa}, DGAA reduces robustness relative to
standard PGD, from 27.48\%/15.65\% to 23.62\%/13.66\% on the base/new
classes. ADAPT nevertheless retains non-trivial robustness under an attack
that explicitly uses the decoy prompts to identify shortcut-confusing
targets.




\end{document}